\documentclass{article}

\usepackage[final]{neurips_2025}

\usepackage[utf8]{inputenc} 
\usepackage[T1]{fontenc}    
\usepackage{hyperref}       
\usepackage{url}            
\usepackage{booktabs}       
\usepackage{amsfonts}       
\usepackage{nicefrac}       
\usepackage{microtype}      
\usepackage{xcolor}         

\usepackage{amsmath}
\usepackage{amssymb}
\usepackage{mathtools}
\usepackage{amsthm}

\usepackage[capitalize,noabbrev]{cleveref}

\theoremstyle{plain}
\newtheorem{theorem}{Theorem}[section]

\newtheorem{lemma}[theorem]{Lemma}

\theoremstyle{definition}

\theoremstyle{remark}

\usepackage[textsize=tiny]{todonotes}

\usepackage{enumitem}
\usepackage{algorithm}
\usepackage{algpseudocode}
\usepackage{caption}
\usepackage{graphicx}
\usepackage{subcaption}
\usepackage{pifont}

\usepackage{multirow}
\usepackage{adjustbox}
\usepackage{xcolor}
\usepackage{bbm}
\usepackage{float}
\usepackage{xspace}
\usepackage{wasysym}

\usepackage{siunitx}
\usepackage{pifont}

\newcommand{\eat}[1]{}

\usepackage{caption} 
\title{Fortifying Time Series: DTW-Certified \\ Robust Anomaly Detection}

\author{Shijie Liu\textsuperscript{\rm 1$\star$}, Tansu Alpcan\textsuperscript{\rm 1}, Christopher Leckie\textsuperscript{\rm 2}, Sarah Erfani\textsuperscript{\rm 2}\\
\textsuperscript{\rm 1}Department of Electrical and Electronic Engineering\\
University of Melbourne, Melbourne, Australia\\
\textsuperscript{\rm 2}School of Computing and Information Systems\\ University of Melbourne, Melbourne, Australia\\
\textsuperscript{$\star$}\texttt{shijie3@unimelb.edu.au}\\
}

\begin{document}

\maketitle

\begin{abstract}
Time-series anomaly detection is critical for ensuring safety in high-stakes applications, where robustness is a fundamental requirement rather than a mere performance metric. Addressing the vulnerability of these systems to adversarial manipulation is therefore essential. Existing defenses are largely heuristic or provide certified robustness only under $\ell_p$-norm constraints, which are incompatible with time-series data. In particular, $\ell_p$-norm fails to capture the intrinsic temporal structure in time series, causing small temporal distortions to significantly alter the $\ell_p$-norm measures. Instead, the similarity metric \emph{Dynamic Time Warping} (DTW) is more suitable and widely adopted in the time-series domain, as DTW accounts for temporal alignment and remains robust to temporal variations. To date, however, there has been no certifiable robustness result in this metric that provides guarantees. In this work, we introduce the first \emph{DTW-certified robust defense} in time-series anomaly detection by adapting the randomized smoothing paradigm. We develop this certificate by bridging the $\ell_p$-norm to DTW distance through a lower-bound transformation. Extensive experiments across various datasets and models validate the effectiveness and practicality of our theoretical approach. Results demonstrate significantly improved performance, e.g., up to 18.7\% in F1-score under DTW-based adversarial attacks compared to traditional certified models.
\end{abstract}

\section{Introduction}
In recent years, significant research has advanced the study of adversarial attacks and certified defenses for machine learning systems. Despite the considerable progress in adversarial robustness across various domains~\cite{ortiz2021optimism,akhtar2021advances,qian2022survey,carlini2019evaluating,chen2022adversarial,alayrac2019labels}, robustness in \emph{time-series anomaly detection} remains comparatively underexplored. As a core component of many safety-critical systems---including healthcare~\cite{harutyunyan2019multitask,rajkomar2018scalable,estiri2021predicting}, finance~\cite{nelson2017stock,fischer2018deep,wang2021survey}, and mobile networks~\cite{shayea2022time,xu2016big,li2023temporal}--- anomaly detectors are essential for identifying abnormal behavior in preventing failures or hazards. Robustness in this context is not merely a model performance concern but a core requirement for operational reliability. Recent work has revealed that time-series anomaly detectors are susceptible to adversarial attacks tailored to the characteristics of time-series data~\cite{belkhouja2022dynamic,belkhouja_adversarial_2022}, underscoring the urgent need for ensuring robustness in this domain.

\begin{figure}
    \centering
    \includegraphics[width=1.0\linewidth]{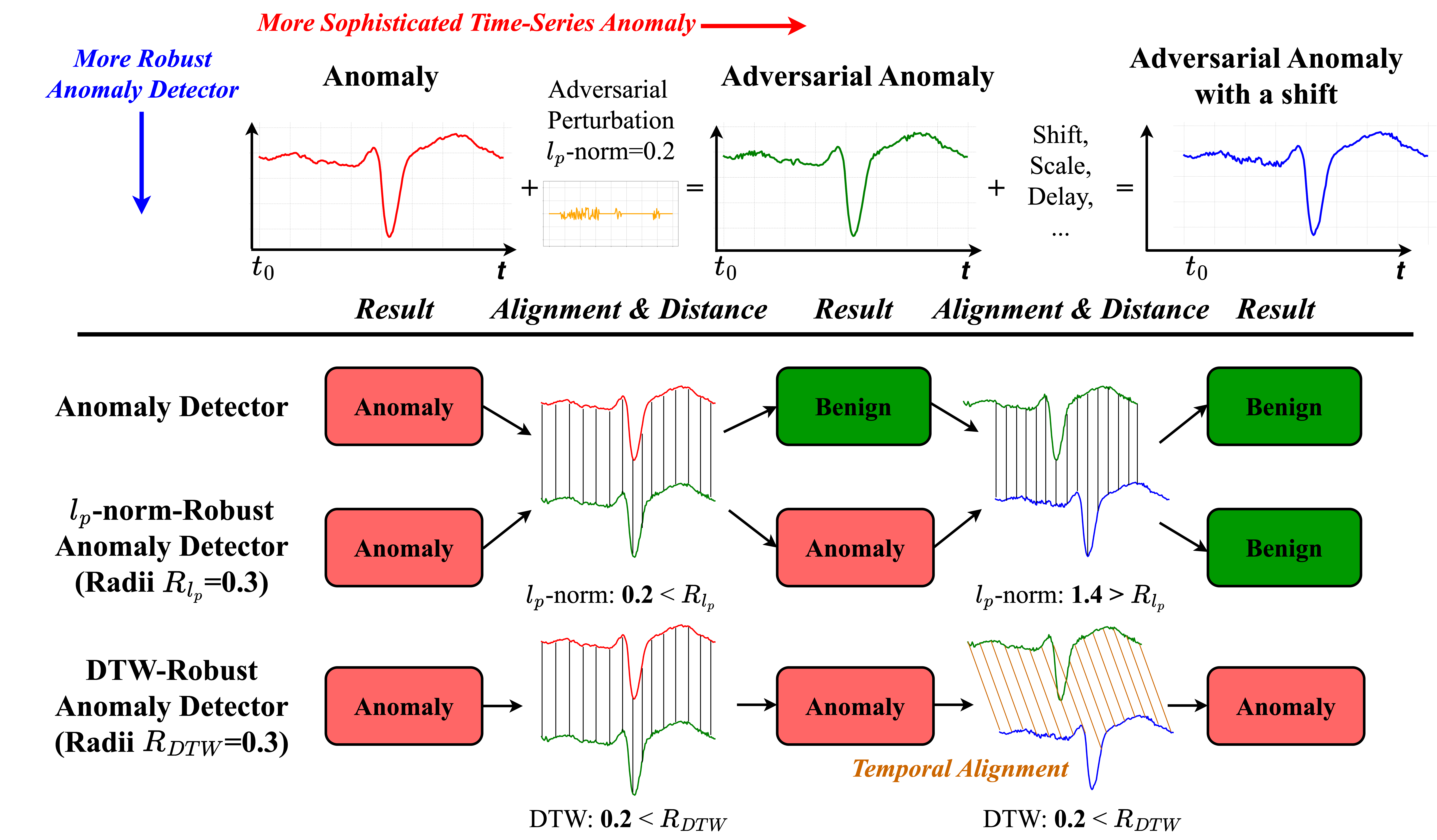}
    \caption{Comparison of standard, $\ell_p$-norm-robust, and DTW-robust anomaly detectors under adversarial perturbations. DTW facilitates optimal temporal alignment, offering a more meaningful similarity measure for time-series data, thereby ensuring more comprehensive robustness guarantees against adversarial examples.}
    \label{fig:threat model}
\end{figure}

Adversaries can manipulate detection outcomes by introducing subtle yet strategically crafted perturbations into anomaly time-series data to evade detection~\cite{tariq2022towards,khan2024adaptedge,yang2022tsadv,wu2022small}. Traditional adversarial threat models typically restrict perturbations to bounded $\ell_p$-norms, widely effective in image~\cite{akhtar2018threat,laidlaw2020perceptual} and text domains~\cite{wang2019towards,zhang2020adversarial,sheatsley2020adversarial} due to the alignment with semantic preservation in such data types. However, time-series data exhibit an inherent \emph{temporal structure} that challenges the assumption of $\ell_p$-norms. As illustrated in \cref{fig:threat model}, small temporal transformations such as shifts or rescaling can significantly inflate $\ell_p$-norm distance (e.g., from $0.2$ to $1.4$), despite 
there being no change to the underlying semantics (e.g., green and blue represent the identical time series). This mismatch renders $\ell_p$-norms inadequate for measuring meaningful similarity in time-series data, limiting their utility for robustness measurement.

Addressing this limitation, we advocate for the use of \emph{Dynamic Time Warping (DTW)} distance, a commonly used similarity metric specifically designed for time-series data. As shown in the bottom right of \cref{fig:threat model}, DTW accommodates \emph{temporal alignments}, which effectively handles temporal variations such as shifts, stretching, and compression. This alignment flexibility preserves structural similarities better than $\ell_p$-norms, consistently demonstrating superior performance for diverse time-series tasks~\cite{gorecki2014non,lines2015time,ding2008querying}. As a result, a DTW-robust detector exhibits stable distance measurements under temporal variations (e.g., consistently $0.2$ as in \cref{fig:threat model}), offering more reliable robustness guarantees compared to $\ell_p$-norm. The need for developing defenses under the DTW distance is further emphasized by recent demonstrations of DTW-based adversarial attacks~\cite{belkhouja2022dynamic,belkhouja_adversarial_2022}, for which no certified defenses currently exist.

In response to adversarial attacks, various defensive strategies have been proposed~\cite{madry2017towards,chakraborty2018adversarial,dong2020adversarial}. Although these empirical defenses provide some resilience, adaptive attackers can often bypass them~\cite{carlini2017towards,wang2019invisible,yao2021automated}. \emph{Certified defenses}~\cite{lecuyer_certified_2019,cohen_certified_2019,salman_provably_nodate}, in contrast, guarantee theoretical robustness against worst-case adversarial scenarios, making them particularly appealing for safety-critical applications. While significant progress has been made in certified robustness under $\ell_p$-norm constraints~\cite{lecuyer_certified_2019, salman_provably_nodate, cohen_certified_2019,salman2020black,li_certified_2019,yang_randomized_2020}, its adaptation to time-series data---under the proper DTW constraints---remains unexplored.

In this paper, we propose the first \emph{DTW-certified robustness} framework in time-series anomaly detection by adapting the randomized smoothing approach~\cite{cohen_certified_2019}. Our approach establishes a novel certification method by bridging $\ell_p$-norm guarantees to the DTW distance through a lower-bound transformation. By leveraging the Keogh Lower Bound~\cite{keogh2005exact}, we are able to derive a closed-form expression of the DTW-certified radius for a smoothed model. The resulting framework is model-agnostic and readily applicable to any pre-trained anomaly detector. Extensive evaluations on real-world datasets and a variety of detection architectures highlight the broad applicability and effectiveness of our method, demonstrating clear advantages over traditional $\ell_p$-norm-certified defenses, e.g., under DTW-based adversarial attacks, our method achieves up to an 18.7\% improvement in F1-score. 

Our key contributions include:
\begin{itemize}[leftmargin=*, noitemsep, topsep=0pt]
    \item We introduce the first theoretical framework that provides certified robustness in DTW distance, addressing an essential yet unexplored gap in time-series anomaly detection.
    \item We present a generalizable defense mechanism that seamlessly integrates DTW certification with any anomaly detection models, significantly enhancing robustness in practice.
    \item We provide comprehensive experimental evaluations demonstrating the practical efficacy of our DTW-certified robustness approach across diverse scenarios.
\end{itemize}

\section{Related Work}
\textbf{Time-series Anomaly Detection }
 Time-series data, consisting of data points sequentially indexed over time, is prevalent across various domains. Detecting anomalies within time-series data is of significant importance~\cite{munir2018deepant,ren2019time,choi2021deep,blazquez2021review,harutyunyan2019multitask,rajkomar2018scalable,estiri2021predicting}, as anomalies often indicate novel, unexpected, or potentially critical events. Recent advances in deep learning have significantly improved detection by enabling models to capture complex temporal and inter-metric dependencies. Modern \emph{deep anomaly detectors}~\cite{wang2023disentangled,chen2022deep,li2021multivariate} have shown strong performance across a range of time-series tasks. However, they also share the same vulnerabilities to adversarial attacks as other machine learning models.

\textbf{Adversarial Attacks }
Adversarial attacks refer to deliberate perturbations introduced into input data to intentionally mislead machine learning models into making incorrect predictions. Typically, these perturbations are minimal in terms of the $\ell_p$-norm, ensuring the semantic consistency and being imperceptible to humans. Such vulnerabilities have been widely demonstrated across various deep learning models~\cite{akhtar2018threat,laidlaw2020perceptual,wang2019towards,zhang2020adversarial,sheatsley2020adversarial}, including anomaly detection tasks~\cite{tariq2022towards,khan2024adaptedge,yang2022tsadv,wu2022small}. 

However, the $\ell_p$-norm is inadequate for measuring differences in \emph{time-series data}, as it fails to account for the underlying temporal structure. Recent studies~\cite{belkhouja2022dynamic,belkhouja_adversarial_2022} have addressed these issues by adopting the DTW distance, a widely recognized measure suitable for time-series analysis, to construct adversarial examples. These studies highlight that DTW-based adversarial attacks are more effective, as the set of permissible perturbations under DTW forms a superset of those constrained by an equivalent $\ell_p$-norm. Additionally, they demonstrate that the defensive strategies designed to counter $\ell_p$-norm adversarial attacks exhibit limited effectiveness against DTW-based attacks~\cite{madry_towards_2019}, highlighting the need for dedicated defenses under the DTW threat model.

\textbf{Certified Robustness } 
Prior defenses such as adversarial training~\cite{madry_towards_2019}, defensive distillation~\cite{papernot2016distillation}, and data purification~\cite{wu2023defenses} offer empirical robustness, but are often circumvented by adaptive adversaries~\cite{carlini2017towards,wang2019invisible,yao2021automated}. In contrast, \emph{certified defenses} have gained significant attention for providing formal, provable guarantees against all possible attacks within a perturbation bound~\cite{lecuyer_certified_2019,cohen_certified_2019,salman2019provably}. In the context of time-series anomaly detection, certified defenses against $\ell_p$-norm attacks have been initially considered in~\cite{franco2023diffusion,cao2024adversarially}, through direct application of randomized smoothing~\cite{cohen_certified_2019}. However, as discussed, the resulting $\ell_p$-norm certificate is inadequate for time-series data and remains vulnerable to DTW-based attacks. Existing defences~\cite{belkhouja2022dynamic} against DTW-based attacks have been limited to empirical without any certification. This work introduces the first certified defense against DTW-based adversarial attacks.

\section{DTW-Certified Defense in Time-Series Anomaly Detection}
\subsection{Problem Setup}
\label{sec:Problem Setup}
\textbf{Time-Series Anomaly Detector } We define the space of time-series signals as $\mathcal{X} = \mathbb{R}^{L \times C}$, where $L$ represents the signal length and $C$ denotes the number of channels. 
Following the common framework for time-series anomaly detection~\cite{xu2022deep,wu2023timesnet,pmlr-v80-ruff18a}, we consider a detector $d: \mathbb{R}^{T \times C} \to \mathcal{Y}=\{0,1\}$ that operates on a sliding window of size $T \leq L$. Given an input sequence $x \in \mathbb{R}^{T \times C}$, the detector computes an \emph{anomaly score} $f(x) \in \mathbb{R}$, which quantifies the likelihood of $x$ being an anomaly, and makes the detection decision via comparing $f(x)$ against the anomaly threshold $\gamma$:
\begin{equation}
d(x) =
\begin{cases}
    1, & f(x) > \gamma \enspace , \\
    0, & f(x) \leq \gamma \enspace,
\end{cases}
\end{equation}
where $y = 1$ indicates an anomalous instance, and $y = 0$ denotes a benign instance. 

\textbf{Distance Metrics }
The difference between two time-series $x$ and $x'$ can be naively measured by the \emph{$\ell_p$-norm distance} as
\begin{equation}
    \|x-x'\|_{p} = \left(\sum_{i=1}^{T} |x_i-x'_i|^{p}\right)^{1/p}\enspace,
\end{equation}
where $x_i, x'_i$ represent the $i$-th element in $x, x'$. However, such a measurement fails to capture the temporal structure of time-series data. Therefore, we consider the \emph{Dynamic Time Warping (DTW) distance}, which resolves the issues by finding the optimal temporal alignment that minimizes the total distance between aligned time-series. Formally, the DTW distance of norm order $p$ is defined as:
\begin{equation}
    DTW_p(x, x') = \min_{\pi \in \mathcal{A}(x,x')}\left(\sum_{(i,j)\in \pi}|x_i - x'_j|^p\right)^{1/p}
\end{equation}
where $\pi$ represents an \emph{alignment path} of length $T$ as a sequence of $T$ index pairs $[(i_1, j_1),\cdots,(i_{T}, j_{T})]$ and $\mathcal{A}(x,x')$ is the set of all admissible paths. An admissible path should satisfy the following conditions: 1) Matched ends, as $\pi_1=(1,1)$ and $\pi_T=(T,T)$, and 2) Monotonically increasing and each time series index should appear at least once, as $i_{k-1} \leq i_{k} \leq i_{k-1} + 1$ and $j_{k-1} \leq j_{k} \leq j_{k-1} + 1$. We adopt $p=2$ as the default norm for DTW in the main text for clarity of exposition; however, the proposed approach generalizes readily to arbitrary norm orders $p$ with minimal modification as detailed in \cref{app:lp_extension}.

\textbf{Threat Model } We assume a strong adversary with white-box access to the anomaly detector $d$, meaning the attacker has full knowledge of the detector and unlimited computational power. Given an input $x$ classified by the detector as $y = f(x)$, the attacker seeks an alternative input $x'$ to perform either an \emph{evasion attack}---suppressing the detection of an actual anomaly, or an \emph{availability attack}---inducing a false alarm on benign input, such that $d(x') \neq d(x)$. To preserve the semantics of the original anomaly $x$, the perturbation in $x'$ must be constrained within a DTW distance $e$ as $DTW(x, x') < e$.

\textbf{Certified Defense Goal } The anomaly detector $d$ is said to provide certified defense at input $x$ of DTW radius $e$, if there exist no $x' \in \{x' \mid DTW(x,x')<e\}$ such that $d(x') \neq d(x)$ with probability at least $1-\alpha$.

\subsection{Theoretical Analysis of DTW-Certified Robustness}
\label{sec:Certified Robustness in DTW}
In this section, we first review the core components of our approach: the randomized smoothing framework~\cite{cohen_certified_2019,chiang_detection_2022} and the DTW lower bound~\cite{keogh2005exact}. We then present \cref{lem:norm DTW transition}, which establishes a formal link between $\ell_p$-norm distances and the DTW lower bound. Building on this connection, we introduce the main theoretical result \cref{the:Robustness Certification for Time-series Anomaly Detection}, which derives a DTW robustness certificate from a smoothed model via the Keogh Lower Bound.

\paragraph{$\ell_p$-norm Certificate via Randomized Smoothing }
Randomized smoothing~\cite{pmlr-v97-cohen19c} constructs a \emph{smoothed function} by taking the Gaussian expectation of a base function $f$ (we defer the details in \cref{app:Randomized Smoothing Details}). However, in time-series anomaly detection, the base function $f(x)$---which outputs an anomaly score for a time series $x$---is typically unbounded and may exhibit high variance. As a result, estimating the Gaussian mean can lead to loose and unreliable robustness bounds.

To address this, we adopt the \emph{percentile smoothing} approach~\cite{chiang2020detection}, which bounds the \emph{$p$-th percentiles} of the base function outputs instead of the mean. Such a smoothing method is more robust to outliers and variance in the output distribution. We construct the \emph{smoothed anomaly score function} $h_p(x): \mathcal{X} \rightarrow \mathbb{R}$ of the anomaly score function $f$, as
\begin{equation}
\label{eq:percentile smoothed function}
    h_p(x) = \sup \{u \in \mathbb{R} \mid \mathbb{P}_{\eta \sim N(0, \sigma^2I)}[f(x+\eta) \leq u] \leq p \} .
\end{equation}
The $h_p$ does not admit a closed form, its value can be bounded by Monte Carlo sampling as outlined in \cref{sec:Certified Defense Implementation}. With the percentile smoothed function, the anomaly score $h_p(x')$ of the adversarial input $x'$ can be certifiably bounded by $h(x)$, as
\begin{lemma}
\label{lem:percentile smoothed function inequality}
    A percentile smoothed function $h_p$ can be bounded as
    \begin{equation}
        h_{\underline{p}}(x) \;\le\; h_p(x') \;\le\; h_{\overline{p}}(x)
    \quad \forall x' \in \{ x' \mid \|x-x'\|_2 \le r\} ,
    \end{equation}
    where $\underline{p} = \Phi(\Phi^{-1}(p)-\frac{r}{\sigma})$ and $\overline{p} = \Phi(\Phi^{-1}(p)+\frac{r}{\sigma})$, with $\Phi$ being the standard Gaussian CDF. 
\end{lemma}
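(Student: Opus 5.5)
The plan is to reduce the statement to a bound on Gaussian probabilities of fixed events, and then handle that bound with the Neyman--Pearson argument of \cite{cohen_certified_2019}. Fix $x'$ with $\|x-x'\|_2\le r$ and fix a threshold $u\in\mathbb{R}$. Define the set $A_u=\{z: f(z)\le u\}$ and the functions $G_x(u)=\mathbb{P}_{\eta}[f(x+\eta)\le u]=\mathbb{P}[x+\eta\in A_u]$ and $G_{x'}(u)=\mathbb{P}[x'+\eta\in A_u]$. Each $G$ is a CDF: nondecreasing and right-continuous. By definition, $h_p(x)=\sup\{u: G_x(u)\le p\}$, so both inequalities in \cref{lem:percentile smoothed function inequality} reduce to comparing $G_x$ and $G_{x'}$ pointwise in $u$.

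\textbf{Key Gaussian comparison.} The main step is to show that for every measurable set $A$,
\[
\Phi\!\left(\Phi^{-1}(\mathbb{P}[x+\eta\in A])-\tfrac{r}{\sigma}\right)\le \mathbb{P}[x'+\eta\in A]\le \Phi\!\left(\Phi^{-1}(\mathbb{P}[x+\eta\in A])+\tfrac{r}{\sigma}\right).
\]
This is the standard Neyman--Pearson argument. Among all sets with a given Gaussian mass under $N(x,\sigma^2I)$, the mass under $N(x',\sigma^2I)$ is extremized by half-spaces orthogonal to $x'-x$. A direct computation on such a half-space gives a shift of $\|x-x'\|_2/\sigma\le r/\sigma$ inside $\Phi$. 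Monotonicity of $\Phi$ then lets us replace $\|x-x'\|_2$ by $r$. I expect this to be the main technical step, although it can be cited from \cite{cohen_certified_2019} or \cite{chiang2020detection}.

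\textbf{Upper bound.} Take any $u$ with $G_{x'}(u)\le p$. By the left inequality above, $\Phi(\Phi^{-1}(G_x(u))-r/\sigma)\le p$. Since $\Phi$ and $\Phi^{-1}$ are increasing, this gives $G_x(u)\le \Phi(\Phi^{-1}(p)+r/\sigma)=\overline{p}$. Hence $\{u: G_{x'}(u)\le p\}\subseteq\{u: G_x(u)\le\overline p\}$, and taking suprema yields $h_p(x')\le h_{\overline p}(x)$.

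\textbf{Lower bound.} Take any $u$ with $G_x(u)\le\underline p$. By the right inequality, $G_{x'}(u)\le\Phi(\Phi^{-1}(\underline p)+r/\sigma)=p$. Hence $\{u:G_x(u)\le\underline p\}\subseteq\{u:G_{x'}(u)\le p\}$, which gives $h_{\underline p}(x)\le h_p(x')$.

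\textbf{Edge cases.} Two degenerate situations need a brief separate treatment. If $G_x(u)\in\{0,1\}$, then $\Phi^{-1}$ is infinite, and the comparison should be handled by the conventions $\Phi(\pm\infty)$; the Neyman--Pearson bound still holds in that case. If a set is empty, its supremum is $-\infty$, so the corresponding inequality holds trivially. No continuity of $f$ is needed, because the argument only uses set inclusions of sublevel sets.
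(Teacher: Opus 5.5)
Your proof is correct and matches the standard argument for this lemma. The paper gives no proof of its own and imports the result from the percentile-smoothing work \cite{chiang2020detection}. That proof likewise applies the Gaussian Neyman--Pearson bound (equivalently, $1/\sigma$-Lipschitzness of $\Phi^{-1}$ of a Gaussian-smoothed $[0,1]$-valued function) to the indicators of the sublevel sets $\{f\le u\}$, then compares the sets over which the suprema are taken, exactly as you do.

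One small addition: the paper's theorem allows $f$ to be random. In that case, apply your comparison to the $[0,1]$-valued function $z\mapsto\mathbb{P}[f(z)\le u]$ rather than to a fixed set $A_u$. Neyman--Pearson covers randomized tests, so the rest of your argument is unchanged.
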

 
\paragraph{Lower Bound of DTW }
The exact computation of DTW is typically expensive and slow, i.e., quadratic time and space complexity. To address this, various lower bounds have been proposed to approximate DTW efficiently. One of the widely used bounds is the Keogh Lower Bound~\citep{keogh2005exact,rath2002lower} $LB\_Keogh(x, x')$, which is calculated by defining two new time series, upper $U$ and lower $L$ envelopes. For each time step $i$ and channel $k$, the envelopes are defined as:
\begin{equation}
\label{eq:envelop def}
    \begin{aligned}
        U_{i,k} &= \max(x_{i-w,k}:x_{i+w,k}) \\
        L_{i,k} &= \min(x_{i-w,k}:x_{i+w,k})
    \end{aligned}
\end{equation}
where the $w: 1 \leq w \leq T$ is the DTW wrapping window size (Sakoe–Chiba band)~\cite{sakoe2003dynamic} that constrains only $x_i$ and $x'_j$ within the window can be aligned. The $LB\_Keogh(x, x')$ is calculated as
\begin{align}
LB\_Keogh_p(x, x') &= \sqrt[p]{
\sum_{i=1}^T
\sum_{k=1}^N 
\begin{cases}
(x'_{i,k} - U_{i,k})^p & \text{if } x'_{i,k} > U_{i,k}, \\
(x'_{i,k} - L_{i,k})^p & \text{if } x'_{i,k} < L_{i,k}, \\
0 & \text{otherwise.}
\end{cases}
}
\end{align}
In summary, the lower bound is calculated as the sum of $\ell_p$-norm distances to the envelope of points in $x'$ that are outside the envelope of $x$.


\paragraph{DTW-Certificate } In the following, we present the theoretical foundation for deriving DTW-certified robustness, offering a robustness measure that is better aligned with the temporal nature of time-series data. By leveraging the percentile-smoothed function and the DTW lower bound, we introduce a lemma that establishes a connection between the $\ell_p$-norm certificate and a robustness certificate in DTW distance through a lower-bound transformation.
\begin{lemma}
\label{lem:norm DTW transition}
    Suppose the certification of a smoothed function $h$ holds for data $x$ as $ a \leq h(x') \leq b, \forall x'\in \{x' \mid \|x' - x\| \leq r\} $. Then, the certification $ a \leq h(x') \leq b, \forall x' \in \{x' \mid \operatorname{DTW}(x, x') \leq e\}$ also holds, where $LB(x,x')$ is a strict lower bound of $\operatorname{DTW}(x, x')$ and 
    \begin{equation}
    e = \inf \{ LB(x, x') \mid \|x - x'\| > r \}.
    \end{equation}
\end{lemma}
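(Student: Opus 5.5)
We argue by contraposition, showing the set inclusion $\{x' \mid \operatorname{DTW}(x,x') \le e\} \subseteq \{x' \mid \|x-x'\| \le r\}$, up to the boundary issue handled below. Once this inclusion holds, the conclusion follows immediately from the hypothesis $a \le h(x') \le b$ on the $\ell_2$ ball. In particular, \cref{lem:percentile smoothed function inequality} supplies exactly such a hypothesis with $a = h_{\underline{p}}(x)$ and $b = h_{\overline{p}}(x)$. No property of $h$ is used; the argument is purely set-theoretic, combined with the lower-bound property of $LB$.

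The key steps, in order, are these. Fix any $x'$ with $\operatorname{DTW}(x,x') \le e$ and suppose, for contradiction, that $\|x-x'\| > r$. Then $x'$ belongs to the set over which the infimum defining $e$ is taken, so $LB(x,x') \ge e$. Because $LB$ is a \emph{strict} lower bound of DTW, we have $\operatorname{DTW}(x,x') > LB(x,x') \ge e$. This contradicts $\operatorname{DTW}(x,x') \le e$. Hence $\|x-x'\| \le r$, and the $\ell_2$ certificate applies to $x'$, giving $a \le h(x') \le b$.

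The only delicate point, and the step where the word ``strict'' earns its place, is the boundary case. If $LB$ were merely a non-strict lower bound, we would get only $\operatorname{DTW}(x,x') \ge e$. A point with $\operatorname{DTW}(x,x') = e$ and $\|x-x'\| > r$ could then not be excluded, and the closed DTW ball would not be covered. I will therefore state explicitly that strictness, meaning $LB(x,x') < \operatorname{DTW}(x,x')$ for every pair with $\|x-x'\| > r$, is what closes this gap.

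Alternatively, if only $LB \le \operatorname{DTW}$ is available, the same argument certifies the open DTW ball $\{\operatorname{DTW}(x,x') < e\}$, which matches the threat model's strict inequality. I would add this as a remark. I would also note the degenerate case where the set $\{x' \mid \|x-x'\| > r\}$ is empty: then $e = +\infty$, and the statement holds trivially.
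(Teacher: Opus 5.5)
Your proof is correct and is essentially the paper's argument: assume $\|x-x'\|>r$, use the definition of the infimum to get $LB(x,x')\ge e$, and combine this with strictness, $LB(x,x')<\operatorname{DTW}(x,x')\le e$, to reach a contradiction. Your side remark is also correct and worth keeping: a merely non-strict bound $LB\le\operatorname{DTW}$ still certifies the open ball $\{x' \mid \operatorname{DTW}(x,x')<e\}$, which is the set used in the threat model, and strictness can genuinely fail for LB\_Keogh (e.g.\ univariate $x=0$, $x'=(c,\dots,c)$, where both LB\_Keogh and DTW equal $c\sqrt{T}$).
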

\begin{proof}
    Assume for the sake of contradiction that the chosen $x'$ does not lie in the $l_2$-ball. Then we have $\|x'-x\|>r$. Since $x'$ is outside the ball, by the definition of $e$ we know that $LB(x,x') \geq e$. This contradicts $LB(x,x') < DTW(x,x') \leq e$ by the definition of the set $\{x' \mid \operatorname{DTW}(x, x') \leq e\}$ and $LB$ is a strict lower bound of $DTW$. Thus, we conclude that any point $x'$ with $DTW(x,x') \leq e$ must satisfy $\|x'-x\| \leq r$, where the certification holds.  
\end{proof}

Building upon \cref{lem:norm DTW transition}, we present the theorem that establishes DTW-certified robustness for the smoothed function. Formally, we define the anomaly score function \( f: \mathcal{X} \rightarrow \mathbb{R} \) of a time-series anomaly detector \( d: \mathcal{X} \rightarrow \mathcal{Y} \), and construct a percentile smoothed version of \( f \), denoted as \( h_p: \mathcal{X} \rightarrow \mathbb{R} \), which serves as the new anomaly score function of $d$. The DTW-certified robustness of \( d \) can then be derived through the following theorem.

\begin{theorem}[Robustness Certification for Time-series Anomaly Detection] Let $f: \mathcal{X} \rightarrow \mathbb{R}$ be any deterministic or random function, and $\eta \sim \mathcal{N}(0, \sigma^2I)$. Let the percentile smoothed function $h_p: \mathcal{X} \rightarrow \mathbb{R}$ be defined as in \cref{eq:percentile smoothed function}. Suppose the anomaly score threshold is $\gamma$, and the following is satisfied for a testing input $x$
\begin{equation}
    \begin{cases}
         h_{\underline{p}}(x) > \gamma, & \text{if } h_p(x) > \gamma \enspace , \\
        h_{\overline{p}}(x) \le \gamma, & \text{if } h_p(x) \le \gamma \enspace. \\
    \end{cases}
\end{equation}
Then $d(x')=d(x)$ is guaranteed to hold for all $\{x': DTW(x,x') \leq e\}$, where
\begin{equation}
    e = 
\begin{cases}
0, & \text{if } r \leq R,\\[6pt]
\sqrt{M^2 + r^2 - R^2} \;-\; M, & \text{if } r > R,
\end{cases}
\end{equation}
with
$$
\begin{gathered}
    \Delta_i = \max\bigl(U_i - x_i,\; x_i - L_i\bigr), 
\quad
R = \sqrt{\sum_{i=1}^{n} \|\Delta_i\|^2}, 
\quad
M = \max_{1 \le i \le n} ||\Delta_i||, \\ 
r=
\begin{cases}
    \sigma 
\bigl(\Phi^{-1}(p) - \Phi^{-1}(\underline{p})\bigr), & \text{if } h_p(x) > \gamma \enspace ,\\
\sigma 
\bigl(\Phi^{-1}(\overline{p}) - \Phi^{-1}(p)\bigr), & \text{if } h_p(x) \le \gamma \enspace.
\end{cases}
\end{gathered}
$$
where $U$ and $L$ are envelopes in the Keogh Lower Bound with wrapping window size $w$ as specified in \cref{eq:envelop def}. 
\label{the:Robustness Certification for Time-series Anomaly Detection}
\end{theorem}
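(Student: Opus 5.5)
The plan is to chain the two earlier lemmas. First obtain an $\ell_2$ certificate of radius $r$ from \cref{lem:percentile smoothed function inequality}. Then convert it into a DTW certificate via \cref{lem:norm DTW transition} with $LB = LB\_Keogh_2$, computing $e=\inf\{LB\_Keogh(x,x') \mid \|x-x'\|>r\}$ explicitly in terms of the envelope slacks $\Delta_i$.

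\emph{Step 1 ($\ell_2$ certificate).} Suppose $h_p(x)>\gamma$. Solving $\underline{p}=\Phi(\Phi^{-1}(p)-r/\sigma)$ for $r$ gives $r=\sigma(\Phi^{-1}(p)-\Phi^{-1}(\underline{p}))$. \cref{lem:percentile smoothed function inequality} then yields $h_p(x')\ge h_{\underline{p}}(x)>\gamma$ for every $x'$ with $\|x-x'\|_2\le r$, so $d(x')=1=d(x)$. The case $h_p(x)\le\gamma$ is symmetric: we use $\overline{p}$ and $h_p(x')\le h_{\overline{p}}(x)\le\gamma$. Hence the hypothesis of \cref{lem:norm DTW transition} holds with this $r$, and $LB\_Keogh$ is a lower bound for DTW under the Sakoe--Chiba band $w$. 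It remains to lower bound the infimum defining $e$.

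\emph{Step 2 (reduce the infimum to a scalar problem).} Write $\delta_i=x'_i-x_i$. If $x'_{i}$ exceeds $U_i$, its excess is $\delta_i-(U_i-x_i)\ge |\delta_i|-\Delta_i$, and the case $x'_i<L_i$ is analogous. So coordinatewise the Keogh term is at least $t_i:=(|\delta_i|-\Delta_i)_+$, which gives $LB\_Keogh(x,x')\ge\|t\|$. For multichannel steps, read $|\cdot|$ and $\Delta_i$ per time step in the channel norm. Since $|\delta_i|\le\Delta_i+t_i$, we have
\begin{equation*}
r^2<\|\delta\|^2\le\sum_i(\Delta_i+t_i)^2=R^2+2\sum_i\Delta_i t_i+\|t\|^2 .
\end{equation*}
If $r\le R$, take $\delta_i=(1+\epsilon)\Delta_i$ with $\epsilon\downarrow 0$. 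These points stay essentially inside the envelope, so the infimum is $0$ and $e=0$.

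\emph{Step 3 (the $r>R$ case, main obstacle).} To obtain the stated radius I need the cross-term bound $\sum_i\Delta_i t_i\le M\|t\|$. With it, the display becomes $\|t\|^2+2M\|t\|>r^2-R^2$, and solving the quadratic gives $\|t\|>\sqrt{M^2+r^2-R^2}-M$. This is exactly the stated $e$, and \cref{lem:norm DTW transition} then concludes.

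The difficulty is that Cauchy--Schwarz only gives $\sum_i\Delta_i t_i\le R\|t\|$, which would produce the weaker radius $r-R$. My first attempt will be an extremal argument. For fixed $\|t\|$, I will try to show that the worst case concentrates the excess on the single time step with the largest slack $M$, so that $\sum_i\Delta_i t_i = M\|t\|$. I will also check whether the band structure of the envelopes (neighbouring steps share slack) or the admissibility of $x'$ forces this concentration.

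I consider it a real risk that this cross-term step fails without an additional assumption. If it does, I would identify the minimal extra hypothesis under which it holds, for example that the excess $t$ is supported on a single step, rather than weaken the radius.
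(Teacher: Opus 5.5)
Your Steps 1--2 follow the paper's route: the $\ell_2$ radius from \cref{lem:percentile smoothed function inequality}, then transfer through \cref{lem:norm DTW transition} with $LB\_Keogh$. Your reduction $LB\_Keogh(x,x')\ge\|t\|$, $\|\delta\|\le\|\Delta+t\|$ is sound. The gap is the one you flagged in Step 3, and it cannot be closed, because the bound $\sum_i\Delta_i t_i\le M\|t\|$ is false. The extremal argument you plan (saturate every slack, put all excess on the step with slack $M$) is exactly what the paper's proof asserts without justification: it takes the worst-case $x'$ to have excess $d$ only at $i^*$, collinear with $\Delta_{i^*}$. That is the wrong extremizer. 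For fixed $\|t\|$, the bound $\|\delta\|^2\le R^2+2\langle\Delta,t\rangle+\|t\|^2$ is largest when $t\propto\Delta$, where Cauchy--Schwarz is tight. Concretely, take $x'=x+(1+\lambda)\Delta$, with each coordinate pushed toward its larger slack. Then $\|x'-x\|=(1+\lambda)R$ and $LB\_Keogh(x,x')=\lambda R$, so $\inf\{LB\_Keogh(x,x')\mid\|x-x'\|>r\}=r-R$. This is strictly below $\sqrt{M^2+r^2-R^2}-M$ whenever $M<R$, since $M\mapsto\sqrt{M^2+c}-M$ is decreasing and equals $r-R$ at $M=R$. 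Neither the band structure nor admissibility helps, because $x'$ is unrestricted apart from $\|x-x'\|>r$.

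This is not only a proof defect: the stated radius is unsound. Take univariate $x$ alternating $0,1$ with $T=100$ and $w=1$, so $\Delta_i=1$, $R=10$, $M=1$, and use the $x'$ above with $r=15$, $\lambda=0.51$. Then $\|x'-x\|=15.1>r$. The path $(1,1),(1,2),(2,3),\dots,(T-1,T),(T,T)$ gives $DTW(x,x')^2\le(T-1)\lambda^2+2(1+\lambda)^2\approx 30.3$, i.e.\ $DTW(x,x')\approx 5.5$. That is far inside the claimed $e=\sqrt{126}-1\approx 10.2$. Now use the linear score $f(z)=\langle z-x,u\rangle$ with $u=(x'-x)/\|x'-x\|$ and $\gamma=\sigma\Phi^{-1}(p)+r$. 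The hypothesis holds with $\overline{p}=\Phi(\Phi^{-1}(p)+r/\sigma)$, but $h_p(x')=\sigma\Phi^{-1}(p)+\|x'-x\|>\gamma$, so $d(x')\neq d(x)$. Hence no extra hypothesis restricting where the excess sits is admissible, since a certificate must cover every $x'$ in the DTW ball. The correct conclusion is the one your Cauchy--Schwarz step already proves: $e=r-R$ for $r>R$. Because $\|\delta\|\le R+\|t\|$ gives $LB\_Keogh(x,x')>r-R$ strictly off the $\ell_2$ ball, you also do not need the strictness hypothesis of \cref{lem:norm DTW transition}. $LB\_Keogh$ does not satisfy it anyway; it equals DTW for constant $x$. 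For the same reason, read $e=0$ in the case $r\le R$ as ``no certificate'': $DTW(x,x')=0$ does not force $\|x-x'\|\le r$ (e.g.\ $x=(0,0,1,1)$, $x'=(0,1,1,1)$).
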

\begin{proof}
    We provide a sketch of the proof below due to space constraints, and the complete proof is available in \cref{app:proof of central theorem}.

    We begin by showing that \( d(x') = d(x) \) holds for all \( x' \) satisfying \( \|x - x'\| \leq r \) using \cref{lem:percentile smoothed function inequality}. The radius \( r \) can be solved as $r=\sigma \bigl(\Phi^{-1}(p) - \Phi^{-1}(\underline{p})\bigr)$ or $r=\sigma \bigl(\Phi^{-1}(\overline{p}) - \Phi^{-1}(p)\bigr)$ depending on the classification outcome. Next, we invoke \cref{lem:norm DTW transition} to translate the certificate from $\ell_p$-norm $r$ to DTW distance $e$, defined as $e = \inf \{ LB(x, x') \mid \|x - x'\| > r \}$, where $LB(x,x')$ is a strict lower bound of the DTW distance. In the final step, we instantiate $LB(x,x')$ with the Keogh Lower Bound, which satisfies the strictness condition for all $w>0$ and $x' \neq x$, and derive the corresponding expression for $e$ by exploiting the structural properties of the upper and lower envelopes $U$ and $L$.
\end{proof}

\subsection{DTW-Certified Defense Implementation}
\label{sec:Certified Defense Implementation}
\begin{figure}
    \centering
    \includegraphics[width=0.9\linewidth]{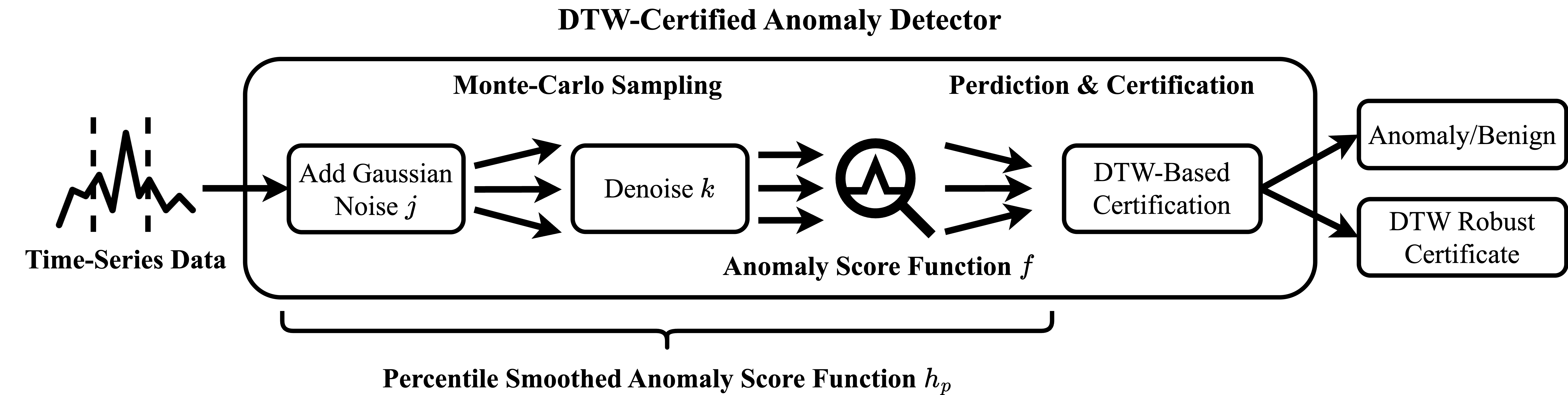}
    \caption{Construct any anomaly detector with anomaly score function $f$ as a DTW-certified detector.}
    \label{fig:approach}
\end{figure}

\textbf{Construct smoothed detector } 
Given an anomaly detector with an anomaly score function $f$, we construct the percentile-smoothed anomaly score function $h_p$ by the definition of \cref{eq:percentile smoothed function} following the process as shown in \cref{fig:approach}. Specifically, the smoothed function is composed as \( h_p = j \circ k \circ f \), where the smoothing noise $\eta \sim \mathcal{N}(0, \sigma^2I)$ injection layer \( j \) generates multiple Gaussian-perturbed inputs $x+\eta$ from the original time-series $x$, and the denoising layer \( k \) reduces noise variance to improve score concentration. This denoising process does not compromise the certification guarantee, as randomized smoothing is valid for any downstream pipeline~\cite{salman2020black}. For each testing input $x$, the DTW-certified anomaly detector outputs a binary decision based on anomaly score $h_p(x)$ and computes the corresponding certified DTW radius $e$ by the \cref{the:Robustness Certification for Time-series Anomaly Detection}. This method does not require modifications to the training process and can be readily applied to pre-trained models.

\textbf{Bound \( h_{\overline{p}}(x) \) and \( h_{\underline{p}}(x) \) } 
We utilize Monte-Carlo sampling to estimate and bound the upper and lower percentiles \( h_{\overline{p}}(x) \) and \( h_{\underline{p}}(x) \), following a similar approach as in~\cite{cohen_certified_2019, chiang_detection_2022}. Given \( n \) i.i.d. Gaussian noise samples \( \{\mu_1,\cdots,\mu_n\} \), we compute anomaly scores \( X_i = f(x + \mu_i) \) and sort them to obtain the empirical order statistics $-\infty = K_0 \leq K_1, \cdots\leq K_n \leq K_{n+1} = \infty$. We aim to identify \( K_{q^u} \) and \( K_{q^l} \) such that $\mathrm{Pr}[K_{q^{u}} > h_{\overline{p}}(x)] > 1-\alpha$ and $\mathrm{Pr}[K_{q^{l}} < h_{\underline{p}}(x)] > 1-\alpha$ for a confidence level of $1-\alpha$ (we set $\alpha =1e{-3}$ in the experiments). The corresponding probabilities are evaluated using the binomial distribution as
\begin{equation}
    \mathrm{Pr}[K_{q^{u}} > h_{\overline{p}}(x)] = \sum^{j=q^u}_{i=1}\binom{n}{i} \left( \bar{p} \right)^i \left( 1 - \bar{p} \right)^{n - i} \text{ . }
\end{equation}
A similar formula applies for the lower bound. We use binary search to identify the smallest \( q^u \) and largest \( q^l \) that satisfy the required confidence bounds. In general, increasing the number of samples improves the estimation accuracy of the certified radius, and greater consensus aggregated predictions indicate a stronger certification.

\section{Experiments}
\label{sec:experiments}
In our experiments, we evaluate the general applicability of the DTW-certified defense across a range of anomaly detection models and time-series datasets. We demonstrate improved robustness compared to $\ell_p$-norm certified defenses and provide ablation studies to analyze the trade-off between detection performance and certified robustness.

\paragraph{Settings}
Our empirical evaluation of the DTW-certified defense spans seven widely used benchmark datasets, including SMAP~\cite{https://doi.org/10.5067/t5ruataqref8}, MSL~~\cite{10.1145/3219819.3219845}, SML~\cite{su2019robust}, NIPS-TS-SWAN, NIPS-TS-CREDITCARD, NIPS-TS-WATER~\cite{lai2021revisiting}, UCR-1 ane UCR-2~\cite{wu2021current}, encompassing both univariate and multivariate time-series data. Detailed descriptions and dataset statistics are provided in \cref{app:Benchmark Dataset Statistics}. To ensure broad applicability, we evaluate our approach using three state-of-the-art anomaly detection models: COUTA~\cite{xu2022deep}, TimesNet~\cite{wu2023timesnet}, and DeepSVDDTS~\cite{pmlr-v80-ruff18a}. The effectiveness is further validated through comparison with $\ell_p$-norm certified defense~\cite{cohen_certified_2019} under DTW-based adversarial attack~\cite{belkhouja2022dynamic}.

We use the following default hyperparameters across all experiments unless otherwise specified: sequence length $T=50$, DTW wrapping window size $w=4$, number of noisy samples $n=1,000$, smoothing noise level $\sigma=0.5$ in $\mathcal{N}(0, \sigma^2I)$, and percentile $p=0.5$ in the percentile-smoothed function $h_p$. Additional ablation studies on the hyperparameters are available in \cref{app:Additional Experiment Results}.  

All experiments are implemented using PyTorch and executed on a Linux server equipped with Intel(R) Xeon(R) Gold 6326 CPUs and NVIDIA A100 GPUs with 80 GB of memory.

\paragraph{Evaluation Metrics}
For evaluating the \emph{detection performance}, we report the point-adjusted \textbf{F1-score} and Area Under the Receiver Operating Characteristic Curve (\textbf{ROC AUC}) following the common practice in the domain of time-series anomaly detection~\cite{10.1145/3394486.3403392,10.1145/3292500.3330680,NEURIPS2020_97e401a0,10.1145/3292500.3330672,li_multivariate_2020,xu2022deep,pmlr-v80-ruff18a,wu2023timesnet}. 

To evaluate \emph{certified robustness}, we report the \textbf{mean}, \textbf{maximum}, and \textbf{standard deviation (std.)} of the certified radii computed for all test instances. Additionally, we report the \textbf{certified proportion (prop.)}, defined as the fraction of test inputs with a non-zero certified radius $e$.

Following the notion of \emph{certified accuracy} from the certified robustness literature~\cite{lecuyer_certified_2019,cohen_certified_2019,salman2019provably}, which is defined as the proportion of instances for which the model guarantees correct predictions within a specified \emph{attack budget} as radius $t$. We extend this evaluation to the confusion matrix components by considering worst-case adversarial scenarios.
Specifically, for evasion attacks, we define Certified True Positives (TP) as the count of true positive instances for which the model is provably robust within a DTW radius of $t$ as $\sum_{\substack{i=1}}^{N} \mathbb{I}\left\{ \forall x'\; :\; DTW(x_i, x') \le t:\; f(x') = 1, y_i=1 \right\}$. Similarly, for availability attacks, we define the Certified True Negatives (TN) as the number of benign instances that remain correctly classified under all perturbations within the DTW radius $t$ as $\sum_{\substack{i=1}}^{N} \mathbb{I}\left\{ \forall x'\; :\; DTW(x_i, x') \le t:\; f(x') = 0, y_i=0 \right\}$. We construct the corresponding certified confusion matrix as detailed in \cref{app:Certified Confusion Matrix}, and derive the certified metrics, \textbf{certified accuracy} and \textbf{certified F1-score} that represent the guaranteed performance under bounded attack.

\subsection{Results}

\textbf{The DTW-certified defense is broadly applicable, though the certified robustness performance varies across datasets and models. } We evaluate the applicability of our DTW-certified defense across benchmark datasets and anomaly detection models. As shown in \cref{tab:main results}, our approach generally achieves strong certified robustness with minimal trade-offs in detection performance. For instance, on the NIPS-TS-WATER using DeepSVDDTS, our method certifies $99.46\%$ of test inputs with an average certified robust DTW-radius of $0.189$, without any degradation in F1-score and ROC AUC. Additionally, DeepSVDDTS often achieves the strongest performance, which we attribute to its superior handling of noisy data. However, we observe weaker robustness on certain datasets, such as SMAP and NIPS-TS-SWAN, due to their high channel dimensionality and greater data variance, which reduces the tightness of the lower-bound estimation and thus limits certifiable robustness. 

\cref{fig:certified metrics} presents the certified F1-score and certified accuracy of the COUTA model on the MSL and SMAP datasets under evasion and availability attacks. The x-axis denotes the attack budget radius $t$, while the y-axis shows the corresponding certified metrics. These curves represent lower bounds on model performance under the worst-case adversarial perturbations constrained by $DTW(x,x')\leq t$, as guaranteed by our DTW-certified defense. With appropriately chosen hyperparameters (e.g., $\sigma=1.0$), the defense exhibits strong certified robustness. For instance, on the SMAP dataset, it maintains a certified F1-score of approximately $0.5$ under an evasion attack with a budget of $t=0.2$.

\begin{figure}[]%
    \centering
    \subfloat[\centering Certified metrics evaluated on the MSL dataset.]{{\includegraphics[width=0.49\textwidth]{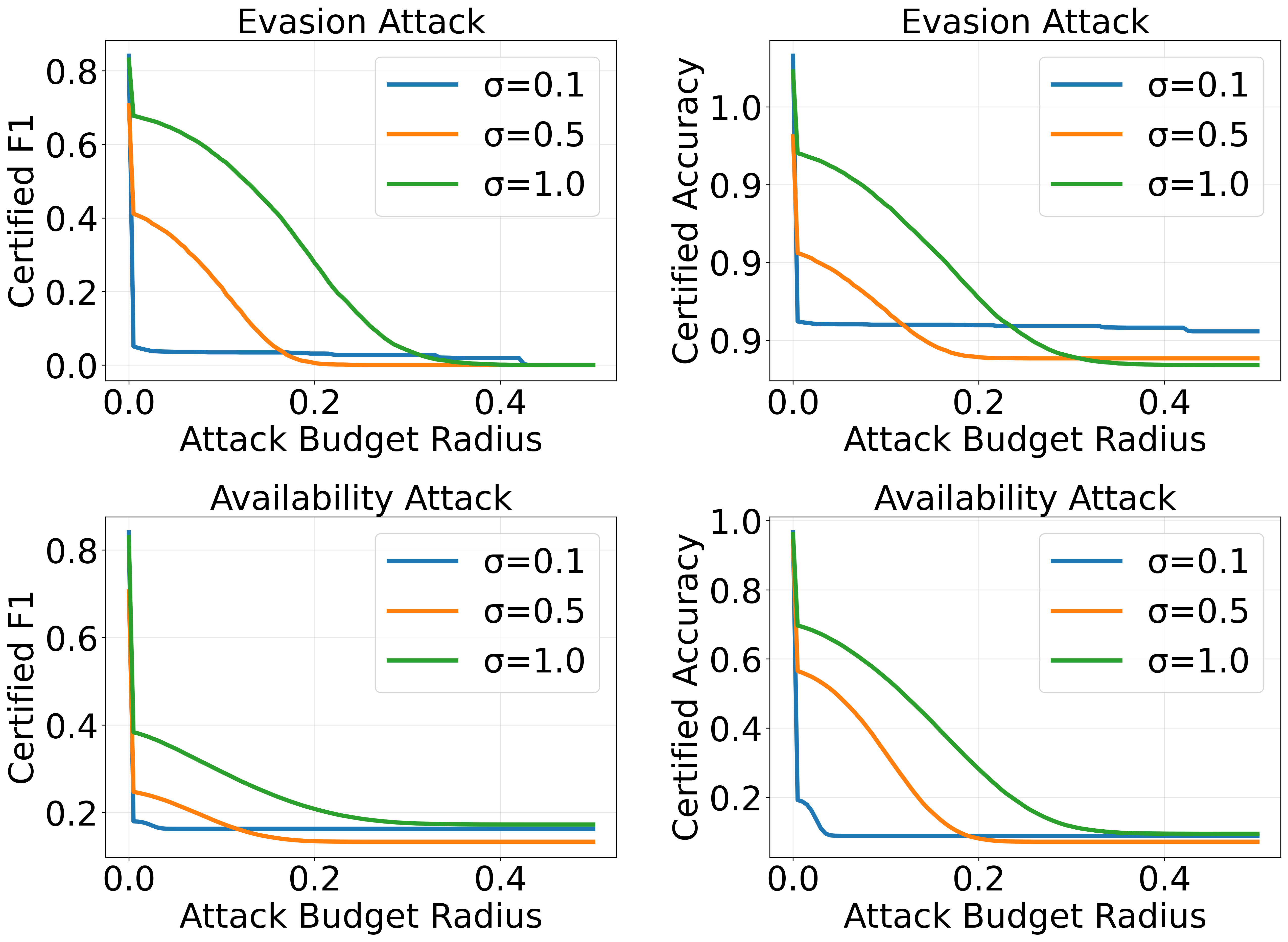} }}%
    \subfloat[\centering Certified metrics evaluated on the SMAP dataset.]{{\includegraphics[width=0.49\textwidth]{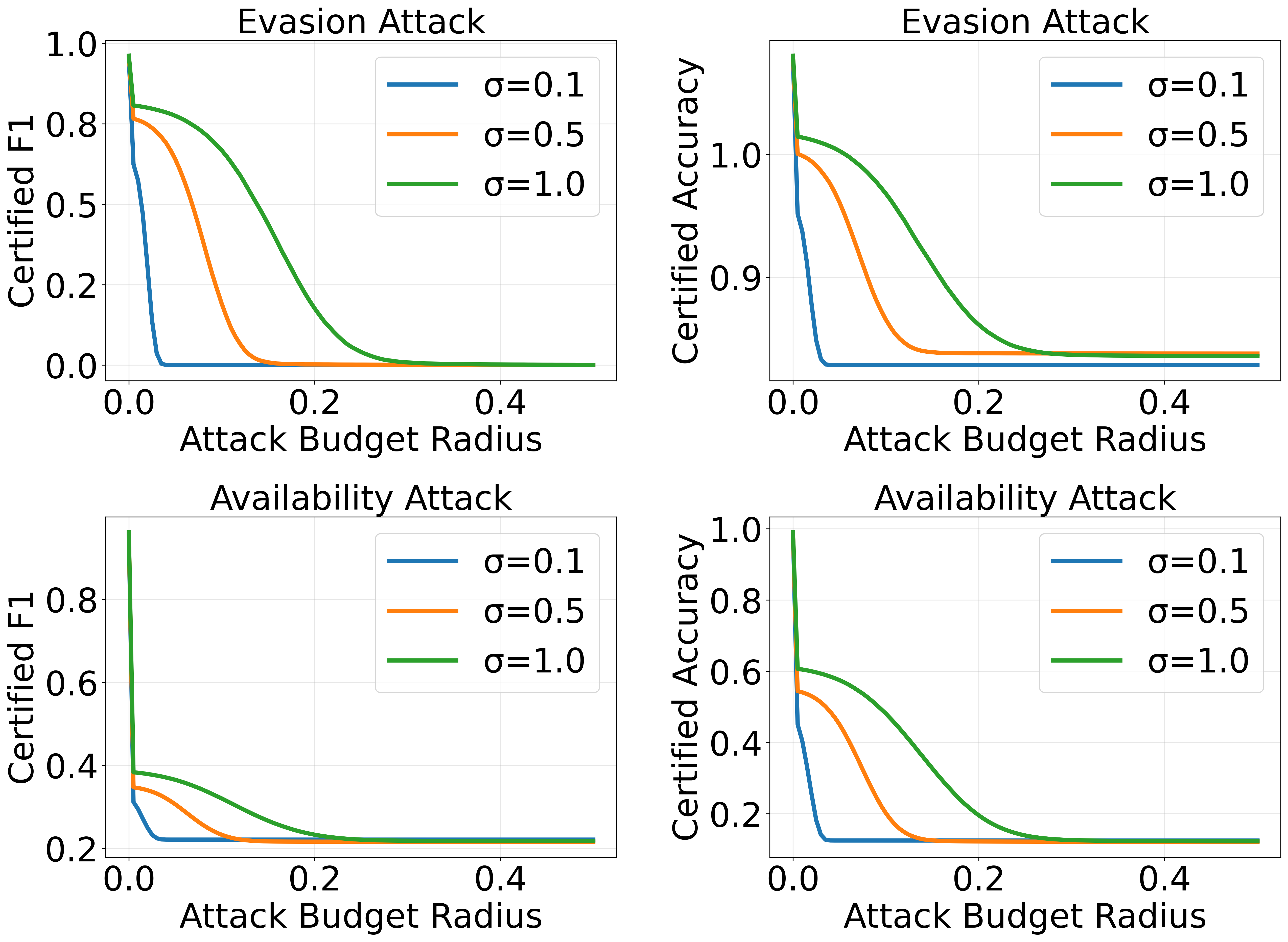} }}%
    \caption{Certified accuracy and certified F1-score as functions of the DTW perturbation threshold $t \in [0.0,0.5]$ under evasion or availability attack. Results are reported for the COUTA model on the MSL (a) and SMAP (b) datasets across varying values of the hyperparameter $\sigma$.}%
    \label{fig:certified metrics}%
\end{figure}

\begin{table}[t]
\begin{adjustbox}{width=\textwidth,center}
\begin{tabular}{@{}c|c|cc|cccccc@{}}
\toprule
\multirow{3}{*}{\textbf{Dataset}}   & \multirow{3}{*}{\textbf{Model}} & \multicolumn{2}{c|}{\textbf{Base Model}}            & \multicolumn{6}{c}{\textbf{DTW-Certified Defense Model}}                                                                                              \\ \cmidrule(l){3-10} 
                                    &                                 & \multicolumn{2}{c|}{\textbf{Detection Performance}} & \multicolumn{2}{c|}{\textbf{Detection Performance}}       & \multicolumn{4}{c}{\textbf{Certified Robustness (Radii Statistics)}}                      \\
                                    &                                 & \textbf{F1-score}         & \textbf{ROC AUC}        & \textbf{F1-score} & \multicolumn{1}{c|}{\textbf{ROC AUC}} & \textbf{Radii Mean} & \textbf{Radii Max} & \textbf{Radii Std.} & \textbf{Certified Prop.} \\ \midrule
\multirow{3}{*}{SMAP}               & COUTA                           & 0.794                     & 0.958                   & 0.961             & \multicolumn{1}{c|}{0.998}            & 0.037               & 0.816              & 0.043               & 51.06\%                  \\
                                    & TimesNet                        & 0.783                     & 0.929                   & 0.910             & \multicolumn{1}{c|}{0.992}            & 0.039               & 1.001              & 0.044               & 51.03\%                  \\
                                    & DeepSVDDTS                      & 0.694                     & \textbf{0.861}          & 0.719             & \multicolumn{1}{c|}{\textbf{0.964}}   & \textbf{0.052}      & 0.626              & 0.055               & \textbf{53.55\%}         \\ \midrule
\multirow{3}{*}{SMD}                & COUTA                           & 0.675                     & 0.956                   & 0.624             & \multicolumn{1}{c|}{0.966}            & 0.083               & 0.386              & 0.060               & 77.99\%                  \\
                                    & TimesNet                        & 0.827                     & 0.995                   & 0.755             & \multicolumn{1}{c|}{0.964}            & 0.032               & 0.267              & 0.036               & 56.69\%                  \\
                                    & DeepSVDDTS                      & 0.725                     & \textbf{0.958}          & 0.733             & \multicolumn{1}{c|}{\textbf{0.957}}   & \textbf{0.255}      & 2.318              & 0.150               & \textbf{93.94\%}         \\ \midrule
\multirow{3}{*}{MSL}                & COUTA                           & 0.911                     & 0.993                   & 0.706             & \multicolumn{1}{c|}{0.966}            & 0.057               & 0.534              & 0.062               & 55.92\%                  \\
                                    & TimesNet                        & 0.744                     & 0.956                   & 0.910             & \multicolumn{1}{c|}{0.993}            & 0.056               & 0.262              & 0.056               & 62.18\%                  \\
                                    & DeepSVDDTS                      & 0.825                     & \textbf{0.973}          & 0.816             & \multicolumn{1}{c|}{\textbf{0.979}}   & \textbf{0.123}      & 2.347              & 0.139               & \textbf{73.89\%}         \\ \midrule
\multirow{3}{*}{NIPS-TS-SWAN}       & COUTA                           & 0.780                     & 0.788                   & 0.738             & \multicolumn{1}{c|}{0.710}            & 0.022               & 0.574              & 0.064               & 14.52\%                  \\
                                    & TimesNet                        & 0.770                     & 0.906                   & 0.773             & \multicolumn{1}{c|}{0.866}            & 0.013               & 0.451              & 0.080               & 10.56\%                  \\
                                    & DeepSVDDTS                      & 0.740                     & \textbf{0.827}          & 0.744             & \multicolumn{1}{c|}{\textbf{0.830}}   & \textbf{0.227}      & 2.242              & 0.196               & \textbf{71.05\%}         \\ \midrule
\multirow{3}{*}{NIPS-TS-CREDITCARD} & COUTA                           & 0.192                     & 0.900                   & 0.003             & \multicolumn{1}{c|}{0.300}            & \textbf{0.232}      & 0.450              & 0.046               & \textbf{99.86\%}         \\
                                    & TimesNet                        & 0.422                     & \textbf{0.942}          & 0.450             & \multicolumn{1}{c|}{\textbf{0.846}}   & 0.028               & 0.262              & 0.035               & 51.75\%                  \\
                                    & DeepSVDDTS                      & 0.132                     & 0.772                   & 0.119             & \multicolumn{1}{c|}{0.719}            & 0.105               & 0.817              & 0.056               & 91.55\%                  \\ \midrule
\multirow{3}{*}{NIPS-TS-WATER}      & COUTA                           & 0.515                     & 0.537                   & 0.598             & \multicolumn{1}{c|}{0.989}            & 0.070               & 0.359              & 0.034               & 95.18\%                  \\
                                    & TimesNet                        & 0.778                     & 0.997                   & 0.550             & \multicolumn{1}{c|}{0.974}            & 0.120               & 0.279              & 0.032               & 99.07\%                  \\
                                    & DeepSVDDTS                      & 0.512                     & \textbf{0.764}          & 0.513             & \multicolumn{1}{c|}{\textbf{0.907}}   & \textbf{0.189}      & 0.540              & 0.039               & \textbf{99.46\%}         \\ \midrule
\multirow{3}{*}{UCR-1}              & COUTA                           & 0.672                     & 0.986                   & 0.949             & \multicolumn{1}{c|}{0.999}            & 0.177               & 0.413              & 0.124               & 69.43\%                  \\
                                    & TimesNet                        & 0.886                     & 0.996                   & 0.845             & \multicolumn{1}{c|}{0.995}            & 0.011               & 0.207              & 0.024               & 25.30\%                  \\
                                    & DeepSVDDTS                      & 0.813                     & \textbf{0.994}          & 0.984             & \multicolumn{1}{c|}{\textbf{1.000}}   & \textbf{0.351}      & 0.758              & 0.220               & \textbf{74.89\%}         \\ \midrule
\multirow{3}{*}{UCR-2}              & COUTA                           & 0.886                     & 0.998                   & 0.842             & \multicolumn{1}{c|}{0.939}            & 0.022               & 0.190              & 0.033               & 38.84\%                  \\
                                    & TimesNet                        & 0.984                     & \textbf{1.000}          & 0.982             & \multicolumn{1}{c|}{\textbf{1.000}}   & \textbf{0.036}      & 0.256              & 0.042               & \textbf{52.15\%}         \\
                                    & DeepSVDDTS                      & 0.118                     & 0.900                   & 0.306             & \multicolumn{1}{c|}{0.970}            & 0.033               & 0.218              & 0.043               & 46.02\%                  \\ \bottomrule
\end{tabular}
\end{adjustbox}
\caption{Detection performance and certified robustness of the DTW-certified defense across various datasets and models with hyperparameter $\sigma=0.5$. The results show minimal degradation in detection performance while consistently achieving meaningful DTW-certified robustness.}
\label{tab:main results}
\end{table}

\begin{table}[]
\begin{adjustbox}{width=0.9\textwidth,center}
\begin{tabular}{@{}c|cc|cccccc@{}}
\toprule
\multirow{3}{*}{\textbf{Dataset}} & \multicolumn{2}{c|}{\textbf{Unattacked}} & \multicolumn{6}{c}{\textbf{Under DTW-based Adversarial Attack with Attack Budget DTW $e_{att}=1.0$}}                                                                                 \\ \cmidrule(l){2-9} 
                                  & \multicolumn{2}{c|}{\textbf{Base Model}}           & \multicolumn{2}{c|}{\textbf{Undefended Base Model}}                    & \multicolumn{2}{c|}{\textbf{$\ell_p$-norm Certified Defense}} & \multicolumn{2}{c}{\textbf{DTW-Certified Defense}} \\
                                  & \textbf{F1-score}   & \textbf{ROC AUC}   & \textbf{F1-score} & \multicolumn{1}{c|}{\textbf{ROC AUC}} & \textbf{F1-score}   & \multicolumn{1}{c|}{\textbf{ROC AUC}}   & \textbf{F1-score}        & \textbf{ROC AUC}        \\ \midrule
MSL                               & 0.896               & 0.992              & 0.694             & \multicolumn{1}{c|}{0.938}            & 0.672               & \multicolumn{1}{c|}{0.943}              & \textbf{0.784}                    & \textbf{0.966}                   \\
SMD                               & 0.575               & 0.938              & 0.253             & \multicolumn{1}{c|}{0.698}            & 0.392               & \multicolumn{1}{c|}{0.741}              & \textbf{0.464}                    & \textbf{0.838}                   \\
NIPS-TS-WATER                     & 0.516               & 0.689              & 0.240             & \multicolumn{1}{c|}{0.665}            & 0.423               & \multicolumn{1}{c|}{0.797}              & \textbf{0.525}                    & \textbf{0.927}                   \\
UCR-1                             & 0.682               & 0.987              & 0.084             & \multicolumn{1}{c|}{0.846}            & 0.761               & \multicolumn{1}{c|}{0.893}              & \textbf{0.948}                    & \textbf{0.998}                   \\ \bottomrule
\end{tabular}
\end{adjustbox}
\caption{Detection performance under DTW-based adversarial attacks, evaluated using the COUTA model across multiple datasets. Comparisons are made among the undefended base model, the $\ell_p$-norm certified defense, and the proposed DTW-certified defense.}
\label{tab:under attack}
\end{table}

\begin{table}[t]
\begin{adjustbox}{width=0.75\textwidth,center}
\begin{tabular}{@{}cccccccc@{}}
\toprule
\multirow{3}{*}{\textbf{Dataset}}                   & \multirow{3}{*}{\textbf{$\sigma$}} & \multicolumn{6}{c}{\textbf{DTW-Certified Defense}}                                                                                                    \\ \cmidrule(l){3-8} 
                                                    &                                    & \multicolumn{2}{c|}{\textbf{Detection Performance}}       & \multicolumn{4}{c}{\textbf{Certified Robustness (Radii Statistics)}}                      \\
                                                    &                                    & \textbf{F1-score} & \multicolumn{1}{c|}{\textbf{ROC AUC}} & \textbf{Radii Mean} & \textbf{Radii Max} & \textbf{Radii Std.} & \textbf{Certified Prop.} \\ \midrule
\multicolumn{1}{c|}{\multirow{4}{*}{SMAP}}          & \multicolumn{1}{c|}{0.1}           & 0.956             & \multicolumn{1}{c|}{0.997}            & 0.007               & 0.320              & 0.010               & 41.13\%                  \\
\multicolumn{1}{c|}{}                               & \multicolumn{1}{c|}{0.5}           & 0.961             & \multicolumn{1}{c|}{0.998}            & 0.037               & 0.816              & 0.043               & 51.06\%                  \\
\multicolumn{1}{c|}{}                               & \multicolumn{1}{c|}{1.0}           & \textbf{0.961}    & \multicolumn{1}{c|}{\textbf{0.998}}   & 0.080               & 1.051              & 0.082               & 58.02\%                  \\
\multicolumn{1}{c|}{}                               & \multicolumn{1}{c|}{2.0}           & 0.913             & \multicolumn{1}{c|}{0.958}            & \textbf{0.202}      & 1.571              & 0.165               & \textbf{72.24\%}         \\ \midrule
\multicolumn{1}{c|}{\multirow{4}{*}{SMD}}           & \multicolumn{1}{c|}{0.1}           & 0.504             & \multicolumn{1}{c|}{0.881}            & 0.025               & 0.190              & 0.028               & 52.42\%                  \\
\multicolumn{1}{c|}{}                               & \multicolumn{1}{c|}{0.5}           & 0.624             & \multicolumn{1}{c|}{0.966}            & 0.083               & 0.386              & 0.060               & 77.99\%                  \\
\multicolumn{1}{c|}{}                               & \multicolumn{1}{c|}{1.0}           & \textbf{0.673}    & \multicolumn{1}{c|}{\textbf{0.977}}   & 0.121               & 0.511              & 0.084               & 83.28\%                  \\
\multicolumn{1}{c|}{}                               & \multicolumn{1}{c|}{2.0}           & 0.315             & \multicolumn{1}{c|}{0.822}            & \textbf{0.456}      & 1.969              & 0.285               & \textbf{94.66\%}         \\ \midrule
\multicolumn{1}{c|}{\multirow{4}{*}{MSL}}           & \multicolumn{1}{c|}{0.1}           & \textbf{0.841}    & \multicolumn{1}{c|}{0.982}            & 0.003               & 0.426              & 0.020               & 11.13\%                  \\
\multicolumn{1}{c|}{}                               & \multicolumn{1}{c|}{0.5}           & 0.706             & \multicolumn{1}{c|}{0.966}            & 0.057               & 0.534              & 0.062               & 55.92\%                  \\
\multicolumn{1}{c|}{}                               & \multicolumn{1}{c|}{1.0}           & 0.830             & \multicolumn{1}{c|}{\textbf{0.984}}   & 0.108               & 0.457              & 0.098               & 68.27\%                  \\
\multicolumn{1}{c|}{}                               & \multicolumn{1}{c|}{2.0}           & 0.739             & \multicolumn{1}{c|}{0.914}            & \textbf{0.294}      & 1.457              & 0.196               & \textbf{87.72\%}         \\ \midrule
\multicolumn{1}{c|}{\multirow{4}{*}{NIPS-TS-WATER}} & \multicolumn{1}{c|}{0.1}           & 0.515             & \multicolumn{1}{c|}{0.775}            & 0.192               & 0.473              & 0.038               & \textbf{99.42\%}         \\
\multicolumn{1}{c|}{}                               & \multicolumn{1}{c|}{0.5}           & \textbf{0.598}    & \multicolumn{1}{c|}{\textbf{0.989}}   & 0.070               & 0.359              & 0.034               & 95.18\%                  \\
\multicolumn{1}{c|}{}                               & \multicolumn{1}{c|}{1.0}           & 0.555             & \multicolumn{1}{c|}{0.986}            & 0.161               & 0.430              & 0.061               & 98.79\%                  \\
\multicolumn{1}{c|}{}                               & \multicolumn{1}{c|}{2.0}           & 0.462             & \multicolumn{1}{c|}{0.983}            & \textbf{0.261}      & 0.711              & 0.114               & 98.54\%                  \\ \midrule
\multicolumn{1}{c|}{\multirow{4}{*}{UCR-1}}         & \multicolumn{1}{c|}{0.1}           & 0.871             & \multicolumn{1}{c|}{0.996}            & 0.023               & 0.112              & 0.027               & 50.63\%                  \\
\multicolumn{1}{c|}{}                               & \multicolumn{1}{c|}{0.5}           & \textbf{0.949}    & \multicolumn{1}{c|}{\textbf{0.999}}   & 0.177               & 0.413              & 0.124               & 69.43\%                  \\
\multicolumn{1}{c|}{}                               & \multicolumn{1}{c|}{1.0}           & 0.919             & \multicolumn{1}{c|}{0.984}            & 0.309               & 0.692              & 0.196               & 75.36\%                  \\
\multicolumn{1}{c|}{}                               & 2.0                                & 0.821             & 0.973                                 & \textbf{0.541}      & 1.209              & 0.300               & \textbf{82.67\%}         \\ \bottomrule
\end{tabular}
\end{adjustbox}
\caption{Detection performance and certified robustness results evaluated under varying $\sigma=\{0.1, 0.5, 1.0, 2.0\}$ using COUTA. Higher $\sigma$ generally yield improved certified robustness (Mean, Max, Prop.), but could at the expense of reduced detection performance (F1-socre, ROC AUC).}
\label{tab:couta sigma}
\end{table}

\textbf{Improved performance over $\ell_p$-norm certified defense. }
\cref{tab:under attack} evaluates the effectiveness of our DTW-certified defense under strong DTW-based adversarial attacks~\cite{belkhouja2022dynamic}. The adversary is granted a generous attack budget of $e_{att}=1.0$, which exceeds the average certified radius of $0.5$ measured by both $\ell_p$-norm and proposed DTW-certified defenses across datasets for model COUTA. As shown in \cref{tab:under attack}, the DTW-based adversarial attack is highly effective against undefended models, causing significant drops in F1-score and ROC AUC (e.g., a 60.6\% drop in F1-score on SMD and 89.7\% on UCR-1). While the $\ell_p$-norm certified defense offers partial resilience, it fails to provide consistent protection, especially on datasets with strong temporal distortions under attacks (e.g., SMD and NIPS-TS-WATER). In contrast, our DTW-certified defense consistently outperforms both baselines under attack, yielding substantially higher F1-scores and AUCs. For example, on MSL and UCR-1, our method improves the F1-score under attack by 11.2\% and 18.7\%, respectively, compared to the $\ell_p$-norm certified defense. These results affirm that robustness guarantees aligned with DTW---rather than $\ell_p$-norm---are essential for effective defense in time-series anomaly detection.

\textbf{Trade-off between detection performance and certified robustness. } 
We investigate the trade-off between detection performance and certified robustness by varying the hyperparameter $\sigma$, which controls the magnitude of Gaussian noise $\mathcal{N}(0,\sigma^2 I)$ used in the smoothing process. Notably, under a moderate setting ($\sigma=0.5$), many configurations exhibit improved detection performance compared to the base model, as shown in datasets SMAP and UCR-1 in \cref{tab:main results}. This observation is consistent with prior work~\cite{cohen_certified_2019, yang2020randomized, pal2023understanding, horvath2021boosting}, where smoothing is shown to enhance generalization by stabilizing decision boundaries. As illustrated in \cref{tab:couta sigma}, increasing $\sigma$ generally improves certified robustness, as reflected in larger average certified radii and a higher proportion of certified inputs. However, overly large values (e.g., $\sigma=2.0$) often degrade detection performance, including both F1-score and ROC AUC. Therefore, the choice of $\sigma$ should be tuned carefully, considering both the model architecture and dataset characteristics.

\section{Conclusion and Limitations}
\label{sec:Conclusion and Limitations}
We present the first certified defense for time-series anomaly detection under the Dynamic Time Warping (DTW) distance---a metric well-suited for capturing temporal structure in time-series data. By adapting randomized smoothing and leveraging the Keogh lower bound, we derive a DTW-certified radius that provides formal robustness guarantees. This method is model-agnostic across diverse datasets and architectures. Empirical results demonstrate that it consistently delivers strong DTW-certified robustness while maintaining strong detection performance.

The Monte Carlo sampling process introduces testing-time overhead, which future work may address by exploring more efficient sampling strategies or adaptive noise injection methods. Additionally, tightening the DTW relaxation by incorporating more precise lower bounds could lead to stronger robustness guarantees. Finally, extending the proposed framework to broader time-series tasks, such as classification, presents a promising direction for future research.

\clearpage
\section*{Acknowledgements}
We thank Dr.~Tarun Soni and Kerry Brown for their helpful discussions and valuable feedback. 
Sarah Monazam Erfani is in part supported by the Australian Research Council (ARC) Discovery Early Career Researcher Award (DECRA) DE220100680.
\bibliography{ref.bib}
\bibliographystyle{plain}


\newpage
\section*{NeurIPS Paper Checklist}

\begin{enumerate}

\item {\bf Claims}
    \item[] Question: Do the main claims made in the abstract and introduction accurately reflect the paper's contributions and scope?
    \item[] Answer: \answerYes{} 
    \item[] Justification: The paper’s contributions and underlying assumptions are clearly stated at the end of the abstract and in the concluding paragraph of the introduction.
    \item[] Guidelines:
    \begin{itemize}
        \item The answer NA means that the abstract and introduction do not include the claims made in the paper.
        \item The abstract and/or introduction should clearly state the claims made, including the contributions made in the paper and important assumptions and limitations. A No or NA answer to this question will not be perceived well by the reviewers. 
        \item The claims made should match theoretical and experimental results, and reflect how much the results can be expected to generalize to other settings. 
        \item It is fine to include aspirational goals as motivation as long as it is clear that these goals are not attained by the paper. 
    \end{itemize}

\item {\bf Limitations}
    \item[] Question: Does the paper discuss the limitations of the work performed by the authors?
    \item[] Answer: \answerYes{} 
    \item[] Justification: The limitations of this work are discussed in \cref{sec:Conclusion and Limitations}.
    \item[] Guidelines:
    \begin{itemize}
        \item The answer NA means that the paper has no limitation while the answer No means that the paper has limitations, but those are not discussed in the paper. 
        \item The authors are encouraged to create a separate "Limitations" section in their paper.
        \item The paper should point out any strong assumptions and how robust the results are to violations of these assumptions (e.g., independence assumptions, noiseless settings, model well-specification, asymptotic approximations only holding locally). The authors should reflect on how these assumptions might be violated in practice and what the implications would be.
        \item The authors should reflect on the scope of the claims made, e.g., if the approach was only tested on a few datasets or with a few runs. In general, empirical results often depend on implicit assumptions, which should be articulated.
        \item The authors should reflect on the factors that influence the performance of the approach. For example, a facial recognition algorithm may perform poorly when image resolution is low or images are taken in low lighting. Or a speech-to-text system might not be used reliably to provide closed captions for online lectures because it fails to handle technical jargon.
        \item The authors should discuss the computational efficiency of the proposed algorithms and how they scale with dataset size.
        \item If applicable, the authors should discuss possible limitations of their approach to address problems of privacy and fairness.
        \item While the authors might fear that complete honesty about limitations might be used by reviewers as grounds for rejection, a worse outcome might be that reviewers discover limitations that aren't acknowledged in the paper. The authors should use their best judgment and recognize that individual actions in favor of transparency play an important role in developing norms that preserve the integrity of the community. Reviewers will be specifically instructed to not penalize honesty concerning limitations.
    \end{itemize}

\item {\bf Theory assumptions and proofs}
    \item[] Question: For each theoretical result, does the paper provide the full set of assumptions and a complete (and correct) proof?
    \item[] Answer: \answerYes{} 
    \item[] Justification: The theoretical results in this paper are provided with full set of assumptions and a complete proof as in \cref{sec:Certified Robustness in DTW} and \cref{app:proof of central theorem}.
    \item[] Guidelines:
    \begin{itemize}
        \item The answer NA means that the paper does not include theoretical results. 
        \item All the theorems, formulas, and proofs in the paper should be numbered and cross-referenced.
        \item All assumptions should be clearly stated or referenced in the statement of any theorems.
        \item The proofs can either appear in the main paper or the supplemental material, but if they appear in the supplemental material, the authors are encouraged to provide a short proof sketch to provide intuition. 
        \item Inversely, any informal proof provided in the core of the paper should be complemented by formal proofs provided in appendix or supplemental material.
        \item Theorems and Lemmas that the proof relies upon should be properly referenced. 
    \end{itemize}

    \item {\bf Experimental result reproducibility}
    \item[] Question: Does the paper fully disclose all the information needed to reproduce the main experimental results of the paper to the extent that it affects the main claims and/or conclusions of the paper (regardless of whether the code and data are provided or not)?
    \item[] Answer: \answerYes{} 
    \item[] Justification: The implementation details to reproduce the algorithm are provided in \cref{sec:Certified Defense Implementation}, the experiments settings are provided in \cref{sec:experiments}.
    \item[] Guidelines:
    \begin{itemize}
        \item The answer NA means that the paper does not include experiments.
        \item If the paper includes experiments, a No answer to this question will not be perceived well by the reviewers: Making the paper reproducible is important, regardless of whether the code and data are provided or not.
        \item If the contribution is a dataset and/or model, the authors should describe the steps taken to make their results reproducible or verifiable. 
        \item Depending on the contribution, reproducibility can be accomplished in various ways. For example, if the contribution is a novel architecture, describing the architecture fully might suffice, or if the contribution is a specific model and empirical evaluation, it may be necessary to either make it possible for others to replicate the model with the same dataset, or provide access to the model. In general. releasing code and data is often one good way to accomplish this, but reproducibility can also be provided via detailed instructions for how to replicate the results, access to a hosted model (e.g., in the case of a large language model), releasing of a model checkpoint, or other means that are appropriate to the research performed.
        \item While NeurIPS does not require releasing code, the conference does require all submissions to provide some reasonable avenue for reproducibility, which may depend on the nature of the contribution. For example
        \begin{enumerate}
            \item If the contribution is primarily a new algorithm, the paper should make it clear how to reproduce that algorithm.
            \item If the contribution is primarily a new model architecture, the paper should describe the architecture clearly and fully.
            \item If the contribution is a new model (e.g., a large language model), then there should either be a way to access this model for reproducing the results or a way to reproduce the model (e.g., with an open-source dataset or instructions for how to construct the dataset).
            \item We recognize that reproducibility may be tricky in some cases, in which case authors are welcome to describe the particular way they provide for reproducibility. In the case of closed-source models, it may be that access to the model is limited in some way (e.g., to registered users), but it should be possible for other researchers to have some path to reproducing or verifying the results.
        \end{enumerate}
    \end{itemize}

\item {\bf Open access to data and code}
    \item[] Question: Does the paper provide open access to the data and code, with sufficient instructions to faithfully reproduce the main experimental results, as described in supplemental material?
    \item[] Answer: \answerYes{} 
    \item[] Justification: The code and environment file are provided in the supplemental material.
    \item[] Guidelines:
    \begin{itemize}
        \item The answer NA means that paper does not include experiments requiring code.
        \item Please see the NeurIPS code and data submission guidelines (\url{https://nips.cc/public/guides/CodeSubmissionPolicy}) for more details.
        \item While we encourage the release of code and data, we understand that this might not be possible, so “No” is an acceptable answer. Papers cannot be rejected simply for not including code, unless this is central to the contribution (e.g., for a new open-source benchmark).
        \item The instructions should contain the exact command and environment needed to run to reproduce the results. See the NeurIPS code and data submission guidelines (\url{https://nips.cc/public/guides/CodeSubmissionPolicy}) for more details.
        \item The authors should provide instructions on data access and preparation, including how to access the raw data, preprocessed data, intermediate data, and generated data, etc.
        \item The authors should provide scripts to reproduce all experimental results for the new proposed method and baselines. If only a subset of experiments are reproducible, they should state which ones are omitted from the script and why.
        \item At submission time, to preserve anonymity, the authors should release anonymized versions (if applicable).
        \item Providing as much information as possible in supplemental material (appended to the paper) is recommended, but including URLs to data and code is permitted.
    \end{itemize}

\item {\bf Experimental setting/details}
    \item[] Question: Does the paper specify all the training and test details (e.g., data splits, hyperparameters, how they were chosen, type of optimizer, etc.) necessary to understand the results?
    \item[] Answer: \answerYes{}{} 
    \item[] Justification: he training and testing details are provided in \cref{sec:experiments}.
    \item[] Guidelines:
    \begin{itemize}
        \item The answer NA means that the paper does not include experiments.
        \item The experimental setting should be presented in the core of the paper to a level of detail that is necessary to appreciate the results and make sense of them.
        \item The full details can be provided either with the code, in appendix, or as supplemental material.
    \end{itemize}

\item {\bf Experiment statistical significance}
    \item[] Question: Does the paper report error bars suitably and correctly defined or other appropriate information about the statistical significance of the experiments?
    \item[] Answer: \answerYes{} 
    \item[] Justification: The statistical significance of the experiments are discussed in \cref{sec:experiments} Evaluation Metrics.
    \item[] Guidelines:
    \begin{itemize}
        \item The answer NA means that the paper does not include experiments.
        \item The authors should answer "Yes" if the results are accompanied by error bars, confidence intervals, or statistical significance tests, at least for the experiments that support the main claims of the paper.
        \item The factors of variability that the error bars are capturing should be clearly stated (for example, train/test split, initialization, random drawing of some parameter, or overall run with given experimental conditions).
        \item The method for calculating the error bars should be explained (closed form formula, call to a library function, bootstrap, etc.)
        \item The assumptions made should be given (e.g., Normally distributed errors).
        \item It should be clear whether the error bar is the standard deviation or the standard error of the mean.
        \item It is OK to report 1-sigma error bars, but one should state it. The authors should preferably report a 2-sigma error bar than state that they have a 96\% CI, if the hypothesis of Normality of errors is not verified.
        \item For asymmetric distributions, the authors should be careful not to show in tables or figures symmetric error bars that would yield results that are out of range (e.g. negative error rates).
        \item If error bars are reported in tables or plots, The authors should explain in the text how they were calculated and reference the corresponding figures or tables in the text.
    \end{itemize}

\item {\bf Experiments compute resources}
    \item[] Question: For each experiment, does the paper provide sufficient information on the computer resources (type of compute workers, memory, time of execution) needed to reproduce the experiments?
    \item[] Answer: \answerYes{} 
    \item[] Justification: The computational resources used for the experiments are specified in \cref{sec:experiments} Settings.
    \item[] Guidelines:
    \begin{itemize}
        \item The answer NA means that the paper does not include experiments.
        \item The paper should indicate the type of compute workers CPU or GPU, internal cluster, or cloud provider, including relevant memory and storage.
        \item The paper should provide the amount of compute required for each of the individual experimental runs as well as estimate the total compute. 
        \item The paper should disclose whether the full research project required more compute than the experiments reported in the paper (e.g., preliminary or failed experiments that didn't make it into the paper). 
    \end{itemize}
    
\item {\bf Code of ethics}
    \item[] Question: Does the research conducted in the paper conform, in every respect, with the NeurIPS Code of Ethics \url{https://neurips.cc/public/EthicsGuidelines}?
    \item[] Answer: \answerYes{} 
    \item[] Justification: This research was conducted in accordance with the NeurIPS Code of Ethics.
    \item[] Guidelines:
    \begin{itemize}
        \item The answer NA means that the authors have not reviewed the NeurIPS Code of Ethics.
        \item If the authors answer No, they should explain the special circumstances that require a deviation from the Code of Ethics.
        \item The authors should make sure to preserve anonymity (e.g., if there is a special consideration due to laws or regulations in their jurisdiction).
    \end{itemize}

\item {\bf Broader impacts}
    \item[] Question: Does the paper discuss both potential positive societal impacts and negative societal impacts of the work performed?
    \item[] Answer: \answerYes{} 
    \item[] Justification: The broader impacts are discussed \cref{app:Border Impact}.
    \item[] Guidelines:
    \begin{itemize}
        \item The answer NA means that there is no societal impact of the work performed.
        \item If the authors answer NA or No, they should explain why their work has no societal impact or why the paper does not address societal impact.
        \item Examples of negative societal impacts include potential malicious or unintended uses (e.g., disinformation, generating fake profiles, surveillance), fairness considerations (e.g., deployment of technologies that could make decisions that unfairly impact specific groups), privacy considerations, and security considerations.
        \item The conference expects that many papers will be foundational research and not tied to particular applications, let alone deployments. However, if there is a direct path to any negative applications, the authors should point it out. For example, it is legitimate to point out that an improvement in the quality of generative models could be used to generate deepfakes for disinformation. On the other hand, it is not needed to point out that a generic algorithm for optimizing neural networks could enable people to train models that generate Deepfakes faster.
        \item The authors should consider possible harms that could arise when the technology is being used as intended and functioning correctly, harms that could arise when the technology is being used as intended but gives incorrect results, and harms following from (intentional or unintentional) misuse of the technology.
        \item If there are negative societal impacts, the authors could also discuss possible mitigation strategies (e.g., gated release of models, providing defenses in addition to attacks, mechanisms for monitoring misuse, mechanisms to monitor how a system learns from feedback over time, improving the efficiency and accessibility of ML).
    \end{itemize}
    
\item {\bf Safeguards}
    \item[] Question: Does the paper describe safeguards that have been put in place for responsible release of data or models that have a high risk for misuse (e.g., pretrained language models, image generators, or scraped datasets)?
    \item[] Answer: \answerNA{} 
    \item[] Justification: The paper does not release new data or models.
    \item[] Guidelines:
    \begin{itemize}
        \item The answer NA means that the paper poses no such risks.
        \item Released models that have a high risk for misuse or dual-use should be released with necessary safeguards to allow for controlled use of the model, for example by requiring that users adhere to usage guidelines or restrictions to access the model or implementing safety filters. 
        \item Datasets that have been scraped from the Internet could pose safety risks. The authors should describe how they avoided releasing unsafe images.
        \item We recognize that providing effective safeguards is challenging, and many papers do not require this, but we encourage authors to take this into account and make a best faith effort.
    \end{itemize}

\item {\bf Licenses for existing assets}
    \item[] Question: Are the creators or original owners of assets (e.g., code, data, models), used in the paper, properly credited and are the license and terms of use explicitly mentioned and properly respected?
    \item[] Answer: \answerYes{} 
    \item[] Justification:  The data and models used in the paper are properly cited as detailed in \cref{sec:experiments}.
    \item[] Guidelines:
    \begin{itemize}
        \item The answer NA means that the paper does not use existing assets.
        \item The authors should cite the original paper that produced the code package or dataset.
        \item The authors should state which version of the asset is used and, if possible, include a URL.
        \item The name of the license (e.g., CC-BY 4.0) should be included for each asset.
        \item For scraped data from a particular source (e.g., website), the copyright and terms of service of that source should be provided.
        \item If assets are released, the license, copyright information, and terms of use in the package should be provided. For popular datasets, \url{paperswithcode.com/datasets} has curated licenses for some datasets. Their licensing guide can help determine the license of a dataset.
        \item For existing datasets that are re-packaged, both the original license and the license of the derived asset (if it has changed) should be provided.
        \item If this information is not available online, the authors are encouraged to reach out to the asset's creators.
    \end{itemize}

\item {\bf New assets}
    \item[] Question: Are new assets introduced in the paper well documented and is the documentation provided alongside the assets?
    \item[] Answer: \answerNA{} 
    \item[] Justification: The paper does not release new assets.
    \item[] Guidelines:
    \begin{itemize}
        \item The answer NA means that the paper does not release new assets.
        \item Researchers should communicate the details of the dataset/code/model as part of their submissions via structured templates. This includes details about training, license, limitations, etc. 
        \item The paper should discuss whether and how consent was obtained from people whose asset is used.
        \item At submission time, remember to anonymize your assets (if applicable). You can either create an anonymized URL or include an anonymized zip file.
    \end{itemize}

\item {\bf Crowdsourcing and research with human subjects}
    \item[] Question: For crowdsourcing experiments and research with human subjects, does the paper include the full text of instructions given to participants and screenshots, if applicable, as well as details about compensation (if any)? 
    \item[] Answer: \answerNA{} 
    \item[] Justification: The paper does not involve crowdsourcing nor research with human subjects.
    \item[] Guidelines:
    \begin{itemize}
        \item The answer NA means that the paper does not involve crowdsourcing nor research with human subjects.
        \item Including this information in the supplemental material is fine, but if the main contribution of the paper involves human subjects, then as much detail as possible should be included in the main paper. 
        \item According to the NeurIPS Code of Ethics, workers involved in data collection, curation, or other labor should be paid at least the minimum wage in the country of the data collector. 
    \end{itemize}

\item {\bf Institutional review board (IRB) approvals or equivalent for research with human subjects}
    \item[] Question: Does the paper describe potential risks incurred by study participants, whether such risks were disclosed to the subjects, and whether Institutional Review Board (IRB) approvals (or an equivalent approval/review based on the requirements of your country or institution) were obtained?
    \item[] Answer: \answerNA{} 
    \item[] Justification: The paper does not involve crowdsourcing nor research with human subjects.
    \item[] Guidelines:
    \begin{itemize}
        \item The answer NA means that the paper does not involve crowdsourcing nor research with human subjects.
        \item Depending on the country in which research is conducted, IRB approval (or equivalent) may be required for any human subjects research. If you obtained IRB approval, you should clearly state this in the paper. 
        \item We recognize that the procedures for this may vary significantly between institutions and locations, and we expect authors to adhere to the NeurIPS Code of Ethics and the guidelines for their institution. 
        \item For initial submissions, do not include any information that would break anonymity (if applicable), such as the institution conducting the review.
    \end{itemize}

\item {\bf Declaration of LLM usage}
    \item[] Question: Does the paper describe the usage of LLMs if it is an important, original, or non-standard component of the core methods in this research? Note that if the LLM is used only for writing, editing, or formatting purposes and does not impact the core methodology, scientific rigorousness, or originality of the research, declaration is not required.
    \item[] Answer: \answerNA{} 
    \item[] Justification: The core method development in this research does not involve LLMs as any important, original, or non-standard components.
    \item[] Guidelines:
    \begin{itemize}
        \item The answer NA means that the core method development in this research does not involve LLMs as any important, original, or non-standard components.
        \item Please refer to our LLM policy (\url{https://neurips.cc/Conferences/2025/LLM}) for what should or should not be described.
    \end{itemize}
\end{enumerate}

\newpage
\appendix

\section{Randomized Smoothing Details}
\label{app:Randomized Smoothing Details}
Randomized smoothing~\cite{pmlr-v97-cohen19c} was originally proposed and widely applied in classification tasks by constructing the \emph{smoothed function} $g(x)$ that takes the Gaussian means of the base function $f$ as 
\begin{equation}
    g(x) = \mathbb{E}_{\eta \sim N(0, \sigma^2I)}[f(x+\eta)] \enspace .
\end{equation}

\begin{lemma}
Given a bounding output range as $f: \mathcal{X} \rightarrow [l, u]$, the upper and lower bounds on the output of the Gaussian smoothed function $g(x')$ can be shown as~\cite{chiang2020detection} 
\begin{equation}
l + (u - l) \cdot \Phi\!\biggl(\frac{k(x) - \|x-x'\|_{2}}{\sigma}\biggr)
\;\le\;
g\bigl(x'\bigr)
\;\le\;
l + (u - l) \cdot \Phi\!\biggl(\frac{k(x) + \|x-x'\|_{2}}{\sigma}\biggr) \enspace ,
\end{equation}
where $k(x) = \sigma \cdot \Phi^{-1}(\frac{g(x)-l}{u-l})$ and $\Phi$ denote the cumulative distribution function (CDF) of the standard Gaussian distribution.
\end{lemma}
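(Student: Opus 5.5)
The plan is to reduce to the $[0,1]$-valued case and then apply the standard Neyman--Pearson worst-case argument of randomized smoothing~\cite{cohen_certified_2019}. Set $\tilde f = (f-l)/(u-l)$, which takes values in $[0,1]$. Its smoothed version is
\[
\tilde g(z) = \mathbb{E}_{\eta}[\tilde f(z+\eta)] = \frac{g(z)-l}{u-l}.
\]
By definition $k(x)/\sigma = \Phi^{-1}(\tilde g(x))$. So, writing $\delta = \|x-x'\|_2$, the claim is equivalent to
\[
\Phi\!\bigl(\Phi^{-1}(\tilde g(x)) - \delta/\sigma\bigr) \le \tilde g(x') \le \Phi\!\bigl(\Phi^{-1}(\tilde g(x)) + \delta/\sigma\bigr).
\]
Multiplying by $(u-l)$ and adding $l$ then recovers the stated bounds. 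I would prove only the lower bound. The upper bound follows by applying the lower bound to $1-\tilde f$, and then using $1-\Phi(a)=\Phi(-a)$ and $\Phi^{-1}(1-q) = -\Phi^{-1}(q)$.

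For the lower bound, let $q=\tilde g(x)$ and consider the optimization problem
\[
\min\Bigl\{\mathbb{E}_{Z\sim\mathcal N(x',\sigma^2 I)}[\phi(Z)] \;:\; \phi:\mathbb{R}^d\to[0,1],\ \mathbb{E}_{Z\sim\mathcal N(x,\sigma^2 I)}[\phi(Z)] = q\Bigr\}.
\]
Since $\tilde g(x') = \mathbb{E}_{\mathcal N(x',\sigma^2I)}[\tilde f]$ and $\tilde f$ is feasible, $\tilde g(x')$ is at least this minimum. (If $f$ is random, average over its internal randomness first; $\tilde f$ then becomes a deterministic $[0,1]$-valued function.)

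The Neyman--Pearson lemma, in its soft-test form for $[0,1]$-valued $\phi$, says the minimizer puts its mass where the likelihood ratio $p_{x'}/p_x$ is smallest. For two isotropic Gaussians, this ratio is a monotone function of $\langle z-x, x'-x\rangle$. Hence the optimum is the indicator of a half-space
\[
H = \{z : \langle z-x, x-x'\rangle \ge c\}.
\]
The constant $c$ is chosen so that the $\mathcal N(x,\sigma^2I)$-mass of $H$ equals $q$.

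Projecting onto the unit direction $v=(x-x')/\delta$ reduces everything to one-dimensional Gaussians. Under $\mathcal N(x,\sigma^2 I)$ the projection $\langle Z - x, v\rangle$ is $\mathcal N(0,\sigma^2)$, so $c/\delta = \sigma\Phi^{-1}(1-q)$. Under $\mathcal N(x',\sigma^2I)$ the same projection is $\mathcal N(-\delta,\sigma^2)$, which gives mass $\Phi(\Phi^{-1}(q)-\delta/\sigma)$ to $H$.

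The main obstacle is stating Neyman--Pearson rigorously for soft tests in $[0,1]$ rather than sets, and handling the boundary cases $q\in\{0,1\}$, where $\Phi^{-1}$ is infinite and the bound is trivial. For the soft-test step I would use the direct exchange argument. For any feasible $\phi$, the function $\phi-\mathbb 1_H$ is $\ge 0$ off $H$ and $\le 0$ on $H$, and $p_{x'}-t\,p_x$ has the matching sign pattern for the threshold $t$ defining $H$. Therefore
\[
\int(\phi-\mathbb 1_H)(p_{x'}-t p_x)\ge 0.
\]
The equal-mass constraint kills the $p_x$ term, which gives $\mathbb{E}_{x'}[\phi]\ge\mathbb{E}_{x'}[\mathbb 1_H]$. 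As a sanity check I would note the alternative route: $\Phi^{-1}\circ\tilde g$ is $1/\sigma$-Lipschitz (Salman et al.), which yields the same inequality immediately.
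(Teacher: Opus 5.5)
Your proof is correct and follows the standard route. The paper gives no proof of this lemma, only the citation to Chiang et al., where it is the usual Gaussian mean-smoothing bound rescaled from $[0,1]$ to $[l,u]$ (equivalently, the $1/\sigma$-Lipschitz property of $\Phi^{-1}\circ\tilde g$ that you mention); your rescaling, reflection via $1-\tilde f$, and Neyman--Pearson half-space computation reproduce it faithfully.

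One small correction concerns the boundary case $q=1$, which is not trivial. There the lower bound asserts $\tilde g(x')\ge 1$, and your exchange argument does not cover it, because $H=\mathbb{R}^d$ would need $t=\infty$. Your reflection step depends on this case to obtain the upper bound when $q=0$. The case does hold: $\phi\le 1$ together with $\mathbb{E}_{\mathcal N(x,\sigma^2 I)}[\phi]=1$ forces $\phi=1$ Lebesgue-almost everywhere, hence $\mathcal N(x',\sigma^2 I)$-almost everywhere.
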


In the context of classification, $g(x)$ is interpreted as the bounded probability score in range $[0,1]$ for each label, e.g., the softmax score, and a certificate can be obtained by bounding the gap between the highest and second-highest scores.

\section{Proof of \cref{the:Robustness Certification for Time-series Anomaly Detection}}
\label{app:proof of central theorem}
Here we provided the complete proof of the \cref{the:Robustness Certification for Time-series Anomaly Detection}.
\begin{proof}
    First, we prove that \( d(x') = d(x) \) holds for all \( x' \) satisfying \( \|x - x'\| \leq r \). In the case of \( d(x) = 1 \), i.e., \( h_p(x) > \gamma \), we prove that all $x'$ satisfies \( h_p(x') > \gamma \). Given the inequality \( h_{\underline{p}}(x) < h_p(x') \) for all \( \|x - x'\|_2 < r \), as established in \cref{lem:percentile smoothed function inequality}, it follows that if \( h_{\underline{p}}(x) > \gamma \), then \( \gamma < h_{\underline{p}}(x) < h_p(x') \), which ensures that $h_p(x') > \gamma$. In the case of \( d(x) = 0 \), the proof follows by a similar argument. The radius \( r \) can be solved by the definitions of \( \underline{p} = \Phi(\Phi^{-1}(p)-\frac{r}{\sigma}) \) and \( \overline{p}=\Phi(\Phi^{-1}(p)+\frac{r}{\sigma}) \) as given in \cref{lem:percentile smoothed function inequality}.  

    By \cref{lem:norm DTW transition}, such certification can be transited to $\{x' \mid DTW(x, x') \leq e\}$ by solving the $e = \inf \{ LB(x, x') \mid \|x - x'\| > r \}$ with a proper choice of the lower bound $LB(x,x')$. 
    
    We consider the Keogh Lower Bound~\cite{keogh2005exact} $LB\_Keogh(x,x')$, which is a strict lower bound of $DTW$ for any $w>0$ and $x' \neq x$. The $LB\_Keogh(x,x')$ is calculated as the sum of deviation of $x'$ outside the envelope of $x$. For each time step $i$ we define the slack (allowable deviation) without incurring any penalty in $LB\_Keogh(x,x')$ by $\Delta_i = \max\bigl(U_i - x_i,\; x_i - L_i\bigr)$ and the sum of all time steps as $R = \sqrt{\sum_{i=1}^{n} \Delta_i^2}$. Thus, if $x'$ could ``hide" all the norm deviation $r$ within the slacks for all time steps as $r \leq R$, the $LB\_Keogh(x,x')=0$. Hence, in that case, $e=0$.

    Then consider the case when $r > R$, which means any $x'$ with $\|x-x'\| > r$ must have at least one coordinate outside the envelope. Since we are solving for the infimum, the smallest possible $LB\_Keogh(x,x')$ is when $\|x-x'\| = r$ and the $x'$ use up the available slack $\Delta_i$ in every coordinate except one $i^*$ where the slack $\Delta_{i^*}$ is largest. Then, such a  worst-case $x'$ is defined as 
    \begin{equation}
        x_i' = 
        \begin{cases}
        x_i + \Delta_i, & \text{if } i \neq i^*,\\
        x_{i^*} + \Delta_{i^*} + d, & \text{if } i = i^*.
        \end{cases}
    \end{equation}
    with $d$ as the part outside the envelops and $\|x-x'\| = r$. In that case
    \begin{equation}
        r^2= \|x - x'\|^2
        = \sum_{i \neq i^*} \|\Delta_i\|^2 + \|\Delta_{i^*} + d\|^2
        = \left(\sum_{i=1}^n \Delta_i^2\right) + 2\,d^{T}\,\Delta_{i^*} + \|d\|^2
    \end{equation}
    Note that the $LB\_Keogh(x,x')$ is calculated as the sum of deviations outside the envelope. Thus, the infimum $e$ when $r > R$ can be obtained by solving $\|d\|$. To yield the extreme value of $\|d\|$, $d$ and $\Delta_{i^*}$ should be collinear and can be written as $d=\lambda \Delta_{i^*}$. With the substitution, the equation becomes
    \begin{equation}
        (\lambda^2+2\lambda)M^2 + R^2 = r^2 \enspace.
    \end{equation}
    Solve for the $\lambda$, we have
    \begin{equation}
        \lambda = -1 \pm \sqrt{1+\frac{r^2-R^2}{M^2}} \enspace .
    \end{equation}
    Therefore, the infimum value of $\|d\|$ is
    \begin{equation}
        ||d|| = \sqrt{M^2 + r^2 - R^2} \;-\; M = e \enspace .
    \end{equation}
\end{proof}

\section{Extension to $\ell_p$ Norm}
\label{app:lp_extension}
\paragraph{Generalization of Randomized Smoothing to Arbitrary Norms}
Our certified robustness analysis in the main text is built upon randomized smoothing under the $\ell_2$ norm using Gaussian noise. 
The framework naturally extends to arbitrary $\ell_p$ norms by replacing the isotropic Gaussian distribution with a noise distribution that is radially symmetric with respect to the chosen norm~\cite{yang_randomized_2020}. 
Let $\|\cdot\|_p$ denote the base norm and $\|\cdot\|_q$ its dual norm, where $\frac{1}{p}+\frac{1}{q}=1$. 
We consider a noise vector $\eta$ drawn from a distribution that is \emph{spherically symmetric} with respect to $\|\cdot\|_p$, such that the density of $\eta$ depends only on $\|\eta\|_p/s$, where $s$ is a scale parameter. Typical choices include: $\ell_2$ with Gaussian noise $\eta\!\sim\!\mathcal{N}(0,\sigma^2I)$; $\ell_1$ with Laplace noise $\eta_i\!\sim\!\text{Laplace}(0,b)$ i.i.d.; $\ell_\infty$ with Uniform noise $\eta_i\!\sim\!\text{Unif}[-\tau,\tau]$ i.i.d. General $\ell_p$ can use generalized Gaussian noise with density proportional to $\exp(-\|\eta\|_p^\alpha / \lambda^\alpha)$ for $\alpha=p$.

\paragraph{Quantile Stability under $\ell_p$ Perturbations}
Let $f$ denote the base anomaly scoring function and $h_p(x)$ the $p$-th percentile of $f(x+\eta)$ under the smoothing noise distribution. 
For any $r \ge 0$, define $F$ as the cumulative distribution function of $\langle u,\eta\rangle$ for any unit vector $u$ with $\|u\|_q=1$. 
Then, the following holds for all $\|x'-x\|_p\le r$:
\begin{align}
h_{F(F^{-1}(p)-r/s)}(x') &\le h_p(x) \le h_{F(F^{-1}(p)+r/s)}(x').
\label{eq:lp_quantile_shift}
\end{align}
Equation~\eqref{eq:lp_quantile_shift} generalizes the Gaussian case by replacing the standard normal CDF $\Phi$ with the 1-D marginal $F$ of the chosen noise distribution, and the Gaussian scale $\sigma$ with the corresponding scale parameter $s$. 
This yields a certified radius in $\ell_p$ norm space.

\paragraph{DTW Certification via $\ell_p$ Lower Bounds}
The DTW-based certification derived in \cref{lem:norm DTW transition} remains valid once $\ell_2$ is replaced by $\ell_p$. 
Specifically, let $\mathrm{LB}_{\text{Keogh},p}(x,x')$ denote a \emph{strict} lower bound of $\mathrm{DTW}_p(x,x')$. 
For any perturbation $\|x'-x\|_p \le r$, the certified DTW radius is
\begin{align}
e_p = \inf\bigl\{\,\mathrm{LB}_{\text{Keogh},p}(x,x'):\, \|x'-x\|_p > r\,\bigr\}.
\label{eq:lp_dtw_cert}
\end{align}
A practical closed-form lower bound can be obtained via
\begin{align}
e_p \;\ge\; \bigl(r^p - R_{\text{in},p}^p\bigr)_{+}^{1/p},
\end{align}
where $R_{\text{in},p}$ is the largest $\ell_p$ ball centered at $x$ fully contained within the envelope $[L,U]$ used in the $\mathrm{LB}_{\text{Keogh},p}$ construction. 
This provides a conservative yet efficient computation of certified DTW radius for arbitrary norms.

This extension preserves the overall structure of the certification pipeline:
\begin{enumerate}[leftmargin=*,noitemsep,topsep=0pt]
    \item Replace Gaussian noise with a norm-symmetric distribution;
    \item Replace the Gaussian CDF $\Phi$ by the 1-D marginal CDF $F$ of that noise;
    \item Compute the $\ell_p$ certified radius $r$ via~\eqref{eq:lp_quantile_shift};
    \item Translate the $\ell_p$ certificate into DTW certificate $e_p$ using~\eqref{eq:lp_dtw_cert}.
\end{enumerate}
This demonstrates that the proposed percentile-based randomized smoothing framework is inherently norm-agnostic, supporting robustness certification under any $\ell_p$ metric and its induced DTW variants.

\section{Dataset Details}
\label{app:Benchmark Dataset Statistics}
\begin{table}[h]
\centering
\begin{adjustbox}{width=0.9\textwidth,center}
\begin{tabular}{lcccc}
\toprule
\textbf{Datasets} & \textbf{Channels} & \textbf{Training Timesteps} & \textbf{Testing Timesteps} & \textbf{Testing Anomalies Ratio \%} \\
\midrule
SMAP & 25 & 135{,}183 & 427{,}617 & 13.13\% \\
MSL  & 55 & 58{,}317  & 73{,}729  & 10.72\% \\
SMD  & 25 & 708{,}405 & 708{,}420 & 4.16\%  \\
NIPS-TS-SWAN & 38 & 60{,}000  & 60{,}000  & 32.60\%  \\
NIPS-TS-CREDITCARD & 29 & 284{,}807  & 284{,}807  & 0.17\%  \\
NIPS-TS-WATER & 9 & 69{,}260  & 69{,}260  & 1.05\%  \\
UCR-1 & 1 & 35{,}000  & 44{,}795  & 1.38\%  \\
UCR-2 & 1 & 35{,}000  & 45{,}000  & 0.67\%  \\
\bottomrule
\end{tabular}
\end{adjustbox}
\caption{Statistics of the benchmark datasets for time-series anomaly detection.}
\label{tab:dataset-summary}
\end{table}

\begin{itemize}[leftmargin=*, noitemsep, topsep=0pt]
    \item The Soil Moisture Active Passive (SMAP) dataset~\cite{https://doi.org/10.5067/t5ruataqref8} contains soil moisture and telemetry measurements collected by NASA's Mars rover.
    \item The Mars Science Laboratory (MSL) dataset~\cite{10.1145/3219819.3219845} includes comprehensive sensor and actuator data directly obtained from the Mars rover.
    \item The Server Machine Dataset (SMD)\cite{su2019robust} offers stacked resource utilization data from 28 machines within a compute cluster, collected over a five-week duration.
    \item The NIPS-TS benchmark suite\cite{lai2021revisiting} and the UCR collection~\cite{wu2021current}, which provide standardized datasets widely employed in time-series anomaly detection.
\end{itemize}

\section{Certified Confusion Matrix}
\label{app:Certified Confusion Matrix}

\begin{table}[h]
    \centering
    \begin{adjustbox}{width=\textwidth,center}
    \begin{tabular}{@{}lll@{}}
    \toprule
             & Predicted Positive & Predicted Negative \\ \midrule
    Positive & $\text{Certified TP}(t)=\sum_{\substack{i=1}}^{N} \mathbb{I}\left\{ \forall x'\; :\; DTW(x_i, x') \le t:\; f(x') = 1, y_i=1 \right\}$      & $\sum_{\substack{i=1}}^{N} \mathbb{I}\left\{y_i=1\right\}-\text{Certified TP}(t)$   \\
    Negative & FP     & TN       \\ \bottomrule
    \end{tabular}
    \end{adjustbox}
    \caption{Certified Confusion Matrix for evasion attacks.}
    \label{tab:certified confusion matrix evasion}
\end{table}

\begin{table}[h]
    \centering
    \begin{adjustbox}{width=\textwidth,center}
    \begin{tabular}{@{}lll@{}}
    \toprule
             & Predicted Positive & Predicted Negative \\ \midrule
    Positive & TP      & FN   \\
    Negative & $\sum_{\substack{i=1}}^{N} \mathbb{I}\left\{y_i=0\right\}-\text{Certified TN}(t)$     & $\text{Certified TN}(t)=\sum_{\substack{i=1}}^{N} \mathbb{I}\left\{ \forall x'\; :\; DTW(x_i, x') \le t:\; f(x') = 0, y_i=0 \right\}$       \\ \bottomrule
    \end{tabular}
    \end{adjustbox}
    \caption{Certified Confusion Matrix for availability attacks.}
    \label{tab:certified confusion matrix availability}
\end{table}

Certified accuracy is a metric widely used in certified robust machine learning, measuring the fraction of examples for which a model can provably maintain correct predictions under specific perturbations. For a certified radius $e$, it is defined as
\begin{equation}
\text{Certified Accuracy}(e) = \frac{1}{N} \sum_{i=1}^{N} \mathbb{I}\left\{ \forall x'\; \text{with}\; DTW(x_i, x') \leq e:\; f(x') = y_i \right\}
\end{equation}

Following the definition of certified accuracy, we construct the certified confusion matrix as described in \cref{sec:experiments}. Given the certified confusion matrix, the \emph{certified accuracy} is computed as the proportion of instances for which the model guarantees correct predictions within a perturbation threshold. It is defined as:
\begin{equation}
    \text{Certified Accuracy} = \frac{\text{Certified TP} + \text{Certified TN}}{N},
\end{equation}
where \( N \) is the total number of test instances.

Similarly, the \emph{certified F1-score}, which balances precision and recall under certification constraints, is calculated as:
\begin{equation}
    \text{Certified F1} = \frac{2 \cdot \text{Certified Precision} \cdot \text{Certified Recall}}{\text{Certified Precision} + \text{Certified Recall}},
\end{equation}
where
\begin{align}
    \text{Certified Precision} &= \frac{\text{Certified TP}}{\text{Certified TP} + \text{FP}}, \\
    \text{Certified Recall} &= \frac{\text{Certified TP}}{\text{Certified TP} + \text{FN}}.
\end{align}

Here, FP and FN refer to false positives and false negatives, respectively, counted as the remaining instances not included in the certified true predictions. These metrics provide a conservative evaluation of model robustness under worst-case adversarial perturbations.

\section{Additional Experiment Results}
\label{app:Additional Experiment Results}

\begin{table}[h]
\begin{adjustbox}{width=\textwidth,center}
\begin{tabular}{@{}c|c|cc|cccccc@{}}
\toprule
\multirow{2}{*}{\textbf{Seq. Length $T$}} & \multirow{2}{*}{\textbf{Window Size $w$}} & \multicolumn{2}{c|}{\textbf{Standard}}          & \multicolumn{6}{c}{\textbf{DTW-Certified Defense}}                                                                                             \\
                                          &                                           & \textbf{F1-score}      & \textbf{ROC AUC}       & \textbf{F1-score}      & \textbf{ROC AUC}       & \textbf{Radii Mean} & \textbf{Radii Max} & \textbf{Radii Std.} & \textbf{Certified Prop.} \\ \midrule
\multirow{3}{*}{10}                       & 2                                         & \multirow{3}{*}{0.571} & \multirow{3}{*}{0.856} & \multirow{3}{*}{0.671} & \multirow{3}{*}{0.943} & 0.088                & 0.337               & 0.053                & 94.40\%                  \\
                                          & 4                                         &                        &                        &                        &                        & 0.085                & 0.333               & 0.053                & 92.59\%                  \\
                                          & 10                                        &                        &                        &                        &                        & 0.083                & 0.326               & 0.054                & 91.01\%                  \\ \midrule
\multirow{3}{*}{50}                       & 2                                         & \multirow{3}{*}{0.675} & \multirow{3}{*}{0.956} & \multirow{3}{*}{0.624} & \multirow{3}{*}{0.966} & 0.090                & 0.401               & 0.058                & 82.75\%                  \\
                                          & 4                                         &                        &                        &                        &                        & 0.083                & 0.386               & 0.060                & 77.99\%                  \\
                                          & 10                                        &                        &                        &                        &                        & 0.074                & 0.374               & 0.061                & 71.05\%                  \\ \midrule
\multirow{3}{*}{100}                      & 2                                         & \multirow{3}{*}{0.656} & \multirow{3}{*}{0.929} & \multirow{3}{*}{0.447} & \multirow{3}{*}{0.878} & 0.131                & 0.699               & 0.104                & 79.69\%                  \\
                                          & 4                                         &                        &                        &                        &                        & 0.119                & 0.678               & 0.107                & 71.54\%                  \\
                                          & 10                                        &                        &                        &                        &                        & 0.107                & 0.635               & 0.107                & 63.30\%                  \\ \midrule
\multirow{3}{*}{200}                      & 2                                         & \multirow{3}{*}{0.681} & \multirow{3}{*}{0.963} & \multirow{3}{*}{0.440} & \multirow{3}{*}{0.880} & 0.057                & 0.392               & 0.076                & 48.50\%                  \\
                                          & 4                                         &                        &                        &                        &                        & 0.050                & 0.391               & 0.074                & 41.05\%                  \\
                                          & 10                                        &                        &                        &                        &                        & 0.041                & 0.389               & 0.070                & 33.33\%                  \\ \bottomrule
\end{tabular}
\end{adjustbox}
\caption{Empirical and certified robustness results for the SMD dataset using the COUTA model with $\sigma=0.5$, evaluated under varying sequence length $T$ and DTW wrapping window size $w$.}
\label{tab:couta seq and window}
\end{table}

\cref{tab:couta seq and window} presents an ablation study on the impact of sequence length $T$ and DTW wrapping window size $w$ on both detection performance and certified robustness. The results indicate a trade-off between these parameters and robustness guarantees. Increasing the sequence length generally enhances detection performance (F1-score and ROC AUC) by incorporating more temporal context for anomaly detection. However, this comes at the cost of reduced certified radius, as the higher dimensionality magnifies the impact of injected noise. Similarly, increasing the wrapping window $w$ allows greater temporal flexibility in DTW alignment but leads to looser Keogh lower bounds and higher slack (as defined by the value $R$ in \cref{the:Robustness Certification for Time-series Anomaly Detection}), thereby weakening the robustness guarantee.

\section{Border Impact}
\label{app:Border Impact}
Time-series anomaly detection plays a crucial role in many safety-critical domains, including healthcare monitoring, financial fraud detection, industrial control systems, and mobile communication networks. In such applications, robustness to adversarial manipulation is not only a matter of performance but also of safety, reliability, and trust. This work contributes to the broader goal of deploying machine learning systems that are resilient to worst-case perturbations in time-series data, particularly those involving temporal distortions.

Our proposed DTW-certified defense offers a principled approach to formally quantifying and improving the robustness of anomaly detection systems under realistic threat models. By aligning the certification metric with the temporal structure of time-series data, we aim to enable more reliable AI systems in high-stakes environments. However, we acknowledge that any advancement in robustness may also encourage the development of stronger adversarial strategies. As such, we encourage responsible deployment and continuous evaluation of these defenses in real-world conditions.

This work is primarily beneficial to organizations seeking reliable time-series analytics in critical domains. It does not disproportionately disadvantage any particular group. Nonetheless, as with any security-related research, care should be taken to ensure that the methodology is not misused to benchmark or strengthen attack strategies without accompanying safeguards.

\end{document}